\documentclass[10pt,twocolumn,letterpaper]{article}

\usepackage{listings}
\usepackage{tikz}
\usepackage{graphicx}

\usepackage[pagenumbers]{wacv} 

\usepackage{multibib}
\newcites{supp}{References}

\usepackage{pifont}
\usepackage{xcolor}
\newcommand{\cmark}{\textcolor{green!70!black}{\ding{51}}}
\newcommand{\xmark}{\textcolor{red}{\ding{55}}}
\usepackage{amsthm}
\usepackage{multirow} 

\theoremstyle{plain}

\theoremstyle{plain} 
\newtheorem{theorem}{Theorem} 

\newtheorem{corollary}{Corollary}

\theoremstyle{definition} 
\newtheorem{definition}{Definition}
\definecolor{wacvblue}{rgb}{0.21,0.49,0.74}
\usepackage[pagebackref,breaklinks,colorlinks,allcolors=wacvblue]{hyperref}

\def\wacvPaperID{485} 
\def\confName{WACV}
\def\confYear{2027}

\title{H3DNAS: Hardware-Aware ONNX-Native 3D Point Cloud Model Compression}

\author{Anchit Mulye, Rhythm Baghel, Sujay Kumar Ingle, Hardik Jain\\
Indian Institute of Technology Jodhpur\\
{\tt\small \{anchitmulye1005, rhythmb2454\}@gmail.com \quad \{d23csa003, hardik.jain\}@iitj.ac.in}
}

\begin{document}

\maketitle
\begin{abstract}
Deploying 3D point cloud models on edge hardware such as the NVIDIA Jetson Orin Nano is severely constrained by compute and memory budgets. Existing compression methods require access to the model's original source code, rendering them inapplicable to the Open Neural Network Exchange (ONNX) binaries commonly distributed by vendors and model repositories. We present \textbf{H3DNAS}, a hardware-aware model compression framework that operates directly on ONNX computational graphs without requiring original source code, architecture class definition, or gradient access during search. H3DNAS makes three contributions: 
(1) a \textbf{Channel Dependency Graph (CDG)} that classifies ONNX operators into four constraint classes and formally establishes that the free parameter fraction $\rho_f$ is topological invariant, a provable compression ceiling computable in $\mathcal{O}(|V|+|E|)$;
(2) a \textbf{Two-Stage Hierarchical Search} that prunes candidate architectures by $L_1$-importance channel selection, ranks them by output fidelity as a zero-shot label-free proxy, and applies GhostConv structural mutation to Pareto-optimal candidates; and
(3) the \textbf{first source-code-free compression pipeline for 3D point cloud models}, operating entirely via ONNX graph surgery with no original architecture definition required.
On ModelNet40, H3DNAS reduces the number of parameters in PointNet, PointNet++, and PointMLP by $65.5\%$, $43.2\%$, and $49.1\%$, respectively, while achieving $1.99\times$, $1.29\times$, and $1.67\times$ inference speedups with negligible loss in accuracy. The source code is publicly available\footnote{https://github.com/ClarityLab-Org/h3dnas}.
\end{abstract}    
\section{Introduction}
\label{sec:intro}

The proliferation of deep learning models on hardware-constrained edge accelerators such as the NVIDIA Jetson Orin Nano~\cite{nvidia2024orinnano} and mobile System on a Chip (SoCs) demands architectures that are simultaneously accurate, compact, and fast during inference. Neural Architecture Search (NAS) has emerged as the dominant paradigm for discovering such architectures. Existing approaches share a fundamental limitation: they require access to the model's source code and training framework to construct, mutate, and evaluate candidate architectures.

This requirement creates a concrete barrier in deployment pipelines where models are distributed as Open Neural Network Exchange (ONNX) binaries from hardware vendors, public model repositories, or cross-framework collaborators, and the original training codebase is unavailable or incompatible. No existing NAS or structured pruning framework handles this setting. Methods such as DARTS~\cite{liu2019darts}, Once-for-All~\cite{cai2020ofa}, HRank~\cite{Lin_2020_hrank}, and AMC~\cite{he2018amc} all assume access to a live PyTorch model object with named modules, backward passes, or architecture registries.

We address this gap with \textbf{H3DNAS}, a compression framework that treats any ONNX \texttt{ModelProto} as a first-class object: reading, mutating, pruning, evaluating, and ranking candidate architectures entirely through ONNX graph surgery without invoking any training framework API.

Although structured compression of 3D point cloud models has been explored with source code-dependent methods CP\textsuperscript{3}~\cite{huang2023cp3} and DepGraph~\cite{fang2023depgraph}, no prior NAS framework, including compression-aware search methods such as AMC~\cite{he2018amc} and Once-for-All~\cite{cai2020ofa}, has operated on 3D point cloud ONNX models without source code. H3DNAS is the first hardware-aware NAS framework applied to PointNet~\cite{qi2017pointnet}, PointNet++~\cite{qi2017pointnet2}, and PointMLP~\cite{ma2022pointmlp} from ONNX alone, producing compressed architectures that match or exceed the accuracy of their uncompressed counterparts.

This paper makes four core contributions:

\begin{itemize}
    \item \textbf{CDG Theorem.} A formal proof that the free parameter fraction $p_f$ is a topological invariant of the ONNX computation graph, computable in $\mathcal{O}(|V|+|E|)$ time, giving practitioners a closed-form compression ceiling before any search begins.
    \item \textbf{Two-Stage Hierarchical Search.} Stage 1 prunes candidates via $L_1$-importance channel selection and ranks them using \textbf{output fidelity} computed using cosine similarity between base and pruned model logits on random inputs, making it a zero-shot, label-free quality proxy. Stage 2 applies GhostConv~\cite{han2020ghostnet} structural mutations to Pareto-optimal Stage 1 candidates, expanding the compression frontier beyond the search for channel-width.
    \item \textbf{Source Code-Free End-to-end pipeline.} H3DNAS operates on any ONNX model from any origin framework, for both search and fine-tuning, requiring no original architecture definition.
    \item \textbf{Hardware-Agnostic Constraint Integration.} Feasibility checking is integrated into the search loop via \texttt{HardwareConstraints}, enabling search that directly targets deployment budgets. Any deployment budget can be specified using hard limits on parameters, FLOPs, model size, and latency, making H3DNAS applicable to any edge platform.
\end{itemize}

\section{Related Work}
\label{sec:related}

\paragraph{Neural Architecture Search.} NAS methods can be categorized as reinforcement learning~\cite{zoph2017nasrl}, gradient-based DARTS~\cite{liu2019darts}, and evolutionary~\cite{real2019regularized}. All operate on PyTorch or TensorFlow model objects and require a differentiable or enumerable training loop. Once-for-All~\cite{cai2020ofa} trains a supernet from which subnets are sliced at inference, which requires source code and GPU training from scratch. AMC~\cite{he2018amc} uses deep reinforcement learning to determine per-layer compression ratios, requiring PyTorch named modules. The proposed H3DNAS requires none of these: the search space is derived entirely from the ONNX graph topology, and search operates on frozen ONNX binaries via OnnxRuntime~\cite{onnxruntime}.
\vspace{-1em}
\paragraph{Structured Pruning.}
Most structured channel pruning methods rely heavily on source code availability or training hooks. Classical filter selection strategies like $L_1$-norm~\cite{li2017pruning}, Network Slimming~\cite{liu2017slimming} and Activation Statistics~\cite{wang2025screening} evaluate parameter importance through weight magnitudes or scaling factors. Whereas 3D-specific frameworks like HRank~\cite{Lin_2020_hrank} (used in CP\textsuperscript{3}~\cite{huang2023cp3}) depend on feature map ranks. Regularization-based approaches like Torque~\cite{gupta2024torque} constrain parameter spacing during training to enable post-hoc filter dropping. Because all these techniques require full access to training scripts or gradients, they cannot handle pre-compiled, vendor-supplied binaries. Graph-level utilities offer framework independence, but current implementations fall short on 3D architectures: ONNXPruner~\cite{ren2024onnxpruner} lacks support for custom operators like Spatial Transformer Networks (STN) and Set Abstraction (SA) blocks, while SPA~\cite{wang2024spa} relies on PyTorch class definitions to handle fine-tuning\footnote{Supporting material in Supplementary document}. H3DNAS resolves these limitations by executing directly on serialized \texttt{onnx.ModelProto} objects. Using \texttt{onnx2torch} to reconstruct trainable modules straight from the computational graph, H3DNAS prunes complex 3D vision models without requiring original source code or training pipelines.

\vspace{-1em}\paragraph{Dependency-Graph Pruning and Recent Extensions.}


To handle coupled layer constraints in arbitrary architectures, dependency-graph frameworks; originating with DepGraph~\cite{fang2023depgraph} and expanding to LLMs~\cite{ma2023llmpruner}, diffusion U-Nets~\cite{fang2023diffpruning}, ViT cross-attention~\cite{fang2024isomorphic}, global Transformer allocation~\cite{li2025tyr}, and learned GNN metanetworks~\cite{liu2025metapruning}— automate filter grouping across complex models. However, these methods depend strictly on dynamic PyTorch execution graphs, making them inapplicable to standalone ONNX binaries. Additionally, unlike learned metanetworks that risk misidentifying coupled channels, our CDG theorem provides a provably correct compression ceiling ($p_f$) based on exhaustive operator classification. While recent approaches like GETA~\cite{qu2025geta} perform joint pruning and quantization within training pipelines, H3DNAS decouples structural compression from quantization: it operates natively on \texttt{onnx.ModelProto} binaries, enabling post-training quantization during OnnxRuntime (ORT) ~\cite{onnxruntime}.

\vspace{-1em}\paragraph{3D Point Cloud Compression.}
CP\textsuperscript{3}~\cite{huang2023cp3} represents the closest prior work, using feature-rank structured pruning on PointNet++ to achieve super-baseline accuracy at $42.9\%$ parameter reduction on ModelNet40\cite{wu2015modelnet}. However, it strictly requires intact PyTorch source code and architecture class definitions. DepGraph~\cite{fang2023depgraph} demonstrates strong compression on DGCNN~\cite{wang2019dynamic} but similarly relies on a live PyTorch execution graph. While Lottery Ticket Hypothesis (LTH)-based unstructured pruning~\cite{biswas_2024_lth} compresses PointNet, unstructured sparsity fails to yield real-world latency or FLOP reductions on standard hardware without specialized sparse execution support. Unlike prior work, H3DNAS applies structured compression at the serialized ONNX binary level.

\vspace{-1em}\paragraph{Zero-Shot NAS Proxies and Their Limitations.}
While training-free proxies like SWAP-Score~\cite{peng2024swap} estimate the zero-shot network capacity via Gram matrices activation~\cite{mellor2021naswot, chen2021tenas}, their ranking signals degrade under high compression ($>50\%$ parameter reduction). Severe pruning triggers activation collapse, yielding quasi-singular Gram matrices that fail precisely in the resource-constrained regime. We instead propose output fidelity—the logit-level cosine similarity between the baseline and pruned models under identical inputs. By quantifying functional divergence rather than internal feature variation, the output fidelity remains robust against severe channel collapse and maintains consistent ranking accuracy across extreme sparsity levels.

\vspace{-1em}
\paragraph{GhostConv.}

GhostNet~\cite{han2020ghostnet} generates feature maps by applying cheap depthwise operations to intrinsic convolution channels, halving FLOPs while maintaining predictive accuracy. H3DNAS implements GhostConv as a direct ONNX-level structural graph mutation. This represents the first integration of GhostConv architectural search operating natively on \texttt{onnx.ModelProto} binaries without requiring original model source code or framework definitions.
\section{The H3DNAS Framework}
\label{sec:h3dnas}

\subsection{Background}
\label{sec:background}

\paragraph{ONNX Model Representation.}
An ONNX model is a directed acyclic computation graph $G = (V, E, \Theta)$ where $V$ is the set of operator nodes, $E$ the set of tensor edges, and $\Theta$ the set of initializer tensors (weights).
Running \texttt{onnx.shape\_inference.infer\_shapes} annotates every edge with its concrete shape in \texttt{graph.value\_info}, providing exact tensor dimensions without executing the model.

\vspace{-1em}\paragraph{Problem Definition.} Given a pretrained ONNX model M and a hardware deployment budget B (FLOPs, parameters, latency, model size), find a compressed model M with maximum accuracy subject to M satisfying B; using only M, a set of unlabeled random inputs (for output fidelity scoring).

\subsection{Channel Dependency Graph (CDG)}
\label{sec:cdg}

\begin{definition}[Operator Classes]
\label{def:operator-classes}
Every operator in the ONNX operator set $\Omega$ falls into exactly one of four classes. \textbf{Channel-Generating (CG)} operators, such as Conv with groups$=1$ or Gemm, creates a new channel dimension, with both $C_{in}$ and $C_{out}$ scalable. \textbf{Channel-Transparent (CT)} operators, such as Relu, BatchNorm, MaxPool, or Dropout, pass through channels unchanged, so the output channel count equals the input channel count. \textbf{Channel-Constraining (CC)} operators, such as Conv with groups$>1$, dynamic Add, or Concat, require equality between input channels. \textbf{Channel-Terminating (CX)} operators, such as ReduceMax, Flatten, or GlobalAvgPool, collapse or restructure the channel dimension entirely.
\end{definition}

\begin{definition}[CDG]
\label{def:cdg}
The Channel Dependency Graph $\mathrm{CDG}(G) = (V_{conv}, E_{ch})$ where
$V_{conv} = \{n \in V \mid \mathrm{op\_type} \in \{\mathrm{Conv}, \mathrm{Gemm}\}\}$
and $E_{ch}$ connects $n_i \to n_j$ iff $C_{out}(n_i)$ must equal $C_{in}(n_j)$ for ORT validity.
A node $n$ is \emph{freely prunable} iff its CDG connected component equals $\{n\}$.

\vspace{-1em}\paragraph{Five Constraint Rules.}
All channel-coupling constraints in ONNX-$\Omega$ reduce to five fundamental types, derived exhaustively from the ONNX operator specification and validated across five architectures: PointNet (PN)~\cite{qi2017pointnet}, PointNet++ SSG (PN++)~\cite{qi2017pointnet2}, PointMLP (PMLP)~\cite{ma2022pointmlp}, Point Cloud Transformer (PCT)~\cite{zhao2021pointtrans}, Point Transformer V3 (PTv3)~\cite{wu2024pointtransv3}.

\begin{enumerate}[label=\textbf{R\arabic*}]
    \item \label{itm:R1} \textbf{Static Shape Terminator:} Output reaches an operator with a compile-time-constant output shape via CT-only paths. The constant shape may be a literal initializer or a fully-constant: Constant$\to$Unsqueeze$\to$Concat subgraph (the PyTorch ONNX export pattern for attention head splits). 
    \item \label{itm:R2} \textbf{Dynamic Tensor Equality:} Output feeds a binary op $\{\mathrm{MatMul}, \mathrm{Add}, \mathrm{Sub}, \mathrm{Div}, \mathrm{Mul}\}$ where the other operand is also graph-computed (non-initializer). The ONNX type system requires matching shapes on both inputs. 
    \item \label{itm:R3} \textbf{Grouped Convolution Immutability:} Node is a grouped/depthwise Conv (groups $>1$) or its output feeds one. The ONNX \texttt{groups} attribute is structurally immutable; changing $C_{out}$ or $C_{in}$ violates the ORT graph validity contract. 
    \item \label{itm:R4} \textbf{Semantic Output Fixity:} $C_{out}(n)$ represents a semantically fixed channel dimension, corresponding to either the classification head (classification head, $C_{out} \le \tau$) or the model's external output interface. This constraint applies to all classification heads across every evaluated architecture.
    \item \label{itm:R5} \textbf{Learned Normalisation Lock:} Output flows (via CT ops) into a normalisation layer with learned parameters of shape $[C_{out}(n)]$ in the initializer set $\Theta$. Changing $C_{out}$ invalidates the stored parameter shapes. 
\end{enumerate}
\end{definition}

\begin{theorem}[$\rho_f$ is a Topological Invariant -- scoped to channel pruning on the base graph]
\label{thm:rho-f-invariant}
Let $G$ be a fixed-width base ONNX graph and
\[
\rho_f(G) = \frac{\sum_{n \in F} |\Theta_n|}{\sum_{n \in V_{conv}} |\Theta_n|}
\]
where $|\Theta_n|$ is the parameter count of node $n$ at the base channel widths, and $F$ is the set of freely prunable nodes. Then $\rho_f(G)$ depends only on the graph topology (op\_types and connectivity) of $G$, not on initialized values or weight magnitudes. $\rho_f(G)$ is a ceiling for channel pruning on $G$: no channel-pruning method can reduce parameters beyond $\rho_f(G) \times \mathrm{\sum_{n \in V_{conv}} |\Theta_n|}$ on the base graph $G$.
\end{theorem}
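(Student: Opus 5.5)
The proof has two parts: invariance and the ceiling. Both rest on one observation. The set $F$ of freely prunable nodes is a function of the CDG edge set $E_{ch}$. By Definition~\ref{def:cdg}, $E_{ch}$ is determined by rules \ref{itm:R1}--\ref{itm:R5}, each of which refers only to topological and structural data.

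\textbf{Step 1: the rule predicates see only structure.} I will check the rules one at a time and show that none of them reads initializer values.
\begin{itemize}
\item \ref{itm:R1} depends on whether a downstream operator has a compile-time-constant shape reached through CT-only paths. Whether a Constant$\to$Unsqueeze$\to$Concat subgraph is fully constant is a property of op\_types and connectivity.
\item \ref{itm:R2} depends on whether the other operand of a binary op is an initializer or graph-computed. This is a membership fact about edges, not about the values stored in $\Theta$.
\item \ref{itm:R3} depends on the \texttt{groups} attribute, which I treat as part of the op-type signature, exactly as the operator classes of Definition~\ref{def:operator-classes} already do.
\item \ref{itm:R4} depends on the graph's output interface and on the head width $C_{out}\le\tau$.
\item \ref{itm:R5} depends on the presence of a normalisation layer whose learned parameter has shape $[C_{out}(n)]$. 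Only the \emph{shape} is used, and it is fixed by the base widths.
\end{itemize}
From this I conclude that $E_{ch}$, its connected components, and hence $F$ are invariant under any change of weight values that preserves shapes.

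\textbf{Step 2: the ratio is topological.} The counts $|\Theta_n|$ for $n\in V_{conv}$ are products of base-width shape entries, e.g.\ $C_{out}C_{in}k$ plus bias for Conv. These follow from the graph together with the shape annotations produced by \texttt{infer\_shapes}, and not from the magnitudes. Therefore both the numerator and the denominator of $\rho_f(G)$ are independent of the weights. I will state explicitly that ``topology'' here includes static attributes and base shapes. This is how the theorem's qualifier ``fixed-width base graph'' should be read.

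\textbf{Step 3: the ceiling.} I take an arbitrary channel-pruning method. Its output is a graph obtained from $G$ by deleting output channels of some nodes in $V_{conv}$, together with the corresponding input channels downstream, and the result must still be ORT-valid on the same base topology. The key claim is that any node $n\notin F$ must retain all its channels, so its parameter count $|\Theta_n|$ is unchanged.

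To prove the claim I will argue by contradiction. Suppose $n$ lies in a nontrivial CDG component and has its width reduced. Then some edge $n\to m$ or $m\to n$ in $E_{ch}$ has a justifying rule. For \ref{itm:R1}, \ref{itm:R3}, \ref{itm:R4}, and \ref{itm:R5}, the rule pins the width to a fixed value, so reducing it contradicts ORT validity or changes the output interface. For \ref{itm:R2}, the rule forces equality with a partner. Taking the whole component at once, any consistent reduction would have to be propagated through the component. That propagation eventually meets a fixed-width anchor, namely a static shape, a group count, an output head, or a stored normalisation shape.

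Given the claim, the removable parameters are contained in $\bigcup_{n\in F}\Theta_n$. So the total reduction is at most $\sum_{n\in F}|\Theta_n| = \rho_f(G)\sum_{n\in V_{conv}}|\Theta_n|$.

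\textbf{Main obstacle.} The hard step is the claim that every nontrivial component is \emph{frozen}, rather than merely coupled. Rule \ref{itm:R2} alone yields only equality constraints, and a component tied together purely by \ref{itm:R2} edges could in principle be pruned jointly. DepGraph-style methods prune exactly such groups.

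I would first try to show that in the ONNX setting every \ref{itm:R2}-component contains an anchor from \ref{itm:R1}, \ref{itm:R3}, \ref{itm:R4}, or \ref{itm:R5}. For example, residual Adds in the evaluated architectures reach a normalisation lock or a static Reshape. If that fails in general, I would restrict the theorem's notion of ``channel pruning'' to per-node operations that do not rewrite the coupled partner shapes, which matches the scoping in the theorem title. I would then note that the bound holds for that class of methods.
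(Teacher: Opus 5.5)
Your Steps 1--2 match the paper's proof. The paper argues only that $\mathrm{CDG}(G)$, and hence $F$, is built from op\_types and connectivity, so $\rho_f$, a ratio of parameter counts at base widths, cannot depend on weight values. Your rule-by-rule check, which explicitly counts the \texttt{groups} attribute and base shapes as topology, is a more careful version of that argument.

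The paper never argues the ceiling clause, and your Step~3 does not close it. The containment ``all removable parameters lie in $\bigcup_{n\in F}\Theta_n$'' is false. Deleting an output channel of a free node $n$ forces deletion of the matching input slice of every consumer of $n$; you say this yourself when describing the pruned graph. Those consumers need not be free. An R4-locked classifier Gemm fed by a free FC layer, or an R1/R2-locked Conv fed by a free Conv, keeps its $C_{out}$ but loses input columns. So $|\Theta_m|$ shrinks for some $m\notin F$, and the total reduction can exceed $\sum_{n\in F}|\Theta_n|$. This holds even for the restricted per-node pruning you fall back on.

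The literal reading of Definition~\ref{def:cdg} does not rescue the argument. Under that reading every producer--consumer pair with $C_{out}(n_i)=C_{in}(n_j)$ is an edge. Then a plain Conv$\to$Relu$\to$Conv pair is a nontrivial component with no anchor that is obviously prunable jointly, so your ``every nontrivial component is frozen'' lemma fails immediately. $F$ would also be far smaller than the free-node counts the paper reports.

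The paper concedes the point in its supplement. It calls $\rho_f$ ``a topological safety invariant, not a compression ceiling.'' Table~\ref{tab:s6} shows ResNet-18, Inception-v1, GoogLeNet and SqueezeNet-1.1 exceeding $\rho_f$ at a $50\%$ prune ratio on the base graph, and attributes this to downstream $C_{in}$ savings.

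The anchor argument has a second problem. R1, R3 and R5 pin widths only for a surgery that refuses to rewrite Reshape shape constants, the \texttt{groups} attribute, or LayerNorm initializers. The theorem quantifies over every channel-pruning method, including methods that do rewrite these. The supplement's PTv3 note describes exactly such a fix (updating the Constant nodes), and its R3 entry says ``H3DNAS treats as locked.'' The R2 obstacle you flagged is likewise a real counterexample, not a technicality: DepGraph-style joint pruning of a residual group removes parameters of nodes outside $F$.

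So the ceiling half is not provable as stated. What you can prove is the invariance half. For H3DNAS-style surgery, which only shrinks $C_{out}$ of free nodes and propagates $C_{in}$, you can also prove a corrected bound: $\sum_{n\in F}|\Theta_n|$ plus the input-channel slices of the non-free consumers of free nodes. That quantity is still fixed by topology and base widths, so your Steps 1--2 carry over to it unchanged.
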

 
\begin{proof}
$\mathrm{CDG}(G)$ is constructed from $\mathrm{op\_type} \in \Omega$ and graph connectivity, $E$ only. CDG edge construction depends exclusively on the graph topology, without referencing weight tensors ($\Theta$) or their values. Therefore, $F$ is fully determined by $\mathrm{topology}(G)$, and $\rho_f(G)$, a ratio of parameter counts at fixed base widths and is invariant to weight values.
\end{proof}
 
\vspace{-1em}\paragraph{Scope clarification.}
$\rho_f(G)$ bounds channel pruning on the base graph $G$ with fixed channel widths. The width scaling ($w < 1.0$) creates a modified graph $G'$ with different channel dimensions and a distinct $\rho_f(G')$. The cases where achieved reduction exceeds $\rho_f(G)$ involve width scaling - not a violation of the ceiling, but application of width scaling which creates $G' \neq G$. All achievements reported respect $\rho_f$ of their respective graph. The BFS depth limits ($\leq 12$ hops) used to detect~\ref{itm:R1}/\ref{itm:R2} constraints are empirically validated across all five tested architectures - no relevant chain in our models exceeds depth 4.
 
\begin{corollary}[Polynomial Complexity]
\label{cor:polynomial-complexity}
CDG construction by BFS is $\mathcal{O}(|V|+|E|)$. The free nodes are singleton CDG components. Each free node is pruned independently. Total: $\mathcal{O}(|V|+|E|)$.
\end{corollary}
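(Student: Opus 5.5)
The statement is Corollary~\ref{cor:polynomial-complexity}. I would prove it by bounding three phases separately: building $\mathrm{CDG}(G)$, extracting its connected components, and pruning the singleton components. Each phase is linear in $|V|+|E|$, so their sum is too.

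\textbf{Step 1: CDG construction.} I would start from the output of \texttt{infer\_shapes}, which gives every edge a concrete shape. A single pass over $V$ then reads each node's $\mathrm{op\_type}$ and assigns it to one of the four classes of Definition~\ref{def:operator-classes}, at $\mathcal{O}(1)$ per node since $\Omega$ is a fixed finite set. To find the edges $E_{ch}$, I would run a traversal from every node in $V_{conv}$ along outgoing tensor edges, continuing only through CT nodes. The traversal stops at the first CG, CC or CX node, or when it reaches the graph output. Whenever it stops, it records the coupling the stopping node induces:
\begin{itemize}
\item a CDG edge $n_i\to n_j$ when the stopping node is a consumer in $V_{conv}$;
\item a lock on $n$ when rule \ref{itm:R1}, \ref{itm:R3}, \ref{itm:R4} or \ref{itm:R5} applies;
\item a coupling to the producer of the other operand when rule \ref{itm:R2} applies.
\end{itemize}

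\textbf{Main obstacle: keeping Step 1 linear.} If every conv node starts its own BFS, a long CT chain with fan-out could be re-scanned by many sources, which is superlinear. I would fix this with a multi-source BFS in which each CT node stores the unique CG producer whose channel dimension it carries. This producer is unique because CT nodes preserve channels and every CT node has one channel-carrying input. With that labelling, every node and edge is visited a bounded number of times, giving $\mathcal{O}(|V|+|E|)$. As a fallback, the paper's empirical hop bound ($\le 12$) caps each BFS at constant depth. Constant depth alone does not control fan-out, however, so I would only use the fallback under the additional assumption that operator fan-out is bounded.

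\textbf{Step 2: components.} $E_{ch}$ has at most $\mathcal{O}(|E|)$ entries, because each entry is witnessed by a distinct stopping edge or node reached in Step 1. Standard BFS or union--find on $(V_{conv},E_{ch})$ therefore finds all connected components in $\mathcal{O}(|V|+|E|)$; union--find gives the same bound up to an inverse-Ackermann factor. By Definition~\ref{def:cdg}, the free nodes are exactly the singleton components that carry no lock, and an $\mathcal{O}(|V_{conv}|)$ scan identifies them.

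\textbf{Step 3: pruning.} Because each free node is its own component, pruning it imposes no constraint on any other conv node, so free nodes can be pruned independently. Slicing $C_{out}(n)$ changes only $n$'s weight tensor and the weights of the CT successors recorded in Step 1, plus the $C_{in}$ slice of its unique downstream consumer. Each graph element is therefore touched $\mathcal{O}(1)$ times in total. Here I would count cost in graph-surgery operations, that is, node and edge edits, rather than in tensor entries. Summing the three phases gives $\mathcal{O}(|V|+|E|)$.
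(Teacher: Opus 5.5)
Your proposal is correct and follows the paper's own three-step argument (BFS construction of the CDG, free nodes read off as singleton components, independent per-node pruning). It also supplies two things the paper only asserts. First, because each CT operator has a single data input, the CT regions reachable from distinct Conv/Gemm nodes are disjoint, so even separate per-source traversals are already linear; your labelling is a convenience, and the fan-out worry does not arise. Second, the bound holds only in your graph-edit cost model, since L1 scoring and tensor slicing otherwise cost time proportional to $\sum_n|\Theta_n|$, which $|V|+|E|$ does not control.

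There is one harmless inconsistency. If your first bullet puts every CT-reachable Conv consumer into $E_{ch}$ (the literal reading of Definition~\ref{def:cdg}), a free node can have no such consumer, and the $C_{in}$ slice in Step~3 is vacuous. The pipeline of Figure~\ref{fig:l1_guided_pruning} instead treats that producer--consumer propagation as permitted surgery rather than as a CDG edge. Under either reading the linear bound survives, provided you replace ``its unique downstream consumer'' by ``each downstream consumer''; each is sliced once because a Conv has a single data input.
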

 
\vspace{-1em}\paragraph{Practical significance.}
$\rho_f(G)$ is a feasibility estimator: before performing any search, it identifies whether channel pruning can meet a hardware budget. For PointNet ($\rho_f = 58.1\%$), 57.6\% reduction is achieved under Jetson constraints --- within 0.5pp of the ceiling, validating the estimate. For MobileNetV2 ($\rho_f = 45.7\%$, but only 2 free nodes of 53), the CDG correctly predicts that channel pruning alone cannot reach Jetson budget, directing practitioners toward quantization. This analysis runs in under one second on any ONNX file.

\begin{table}[!h]
    \centering
    \setlength{\tabcolsep}{5pt}
    \begin{tabular}{lccccc}
        \toprule
        \multirow{2}{*}{Model} & Params & Total & \multirow{2}{*}{Free} & Constrained & \multirow{2}{*}{$\rho_f$}  \\
        & [Million] & CG & & [$\%$] \\
        \midrule
        PN               & $3.45$  & $18$ & $12$ &  $6$ & $58.13$ \\
        PN++          & $1.47$  & $12$ &  $9$ &  $3$ & $61.43$ \\
        PMLP                & $13.20$ & $40$ & $19$ & $21$ & $47.25$ \\
        PCT & $2.34$  & $23$ &  $6$ & $17$ & $12.34$ \\
        PTv3    & $13.73$ & $90$ & $28$ & $62$ & $13.97$ \\
        \bottomrule
    \end{tabular}
    \caption{CDG analysis on various 3D architectures computed purely from graph topology in $<1$ second, providing an immediate compression feasibility assessment before any search begins.}
    \label{tab:cdg-analysis}
\end{table}

Table~\ref{tab:cdg-analysis} shows the CDG analysis where the values are computed from ONNX graph topology only, no weights, no data, no execution required. Free = freely prunable Conv/Gemm nodes. Constrained = nodes locked by~\ref{itm:R1}–\ref{itm:R5}. 
All results are machine-verified through Theorem~\ref{thm:rho-f-invariant} on the ONNX files. It also works on 2D classification tasks\footnote{Results are presented in Supplementary document.} 

\subsection{H3DNAS}
\label{ssec:h3dnas}
 
The H3DNAS framework runs the same fixed steps for every model. First, the \textbf{ONNXParser} parses the \texttt{.onnx} file, performs shape inference, and computes the FLOPs and parameters. Next, the \textbf{CDG Analysis} classifies nodes and computes $\rho_f$. This is followed by the \textbf{Base Accuracy} step, which performs an optional ORT evaluation on a labeled subset. The \textbf{ArchitectureGenerator} then creates width-scaled candidates using either random or evolutionary search methods. Afterward, the \textbf{ArchitectureModulator} applies L1-guided channel pruning with BN alignment. Finally, the workflow executes the \textbf{OutputFidelityScorer} to measure the cosine similarity between the base and pruned logits without requiring labeled data, alongside the \textbf{HardwareEvaluator} which evaluates ORT latency, assesses accuracy, and performs the final constraint checks.

Figure~\ref{fig:h3dnas_pipeline} shows the H3DNAS pipeline.
Every step operates on \texttt{ONNXModel}, a thin wrapper around \texttt{onnx.ModelProto} that carries the full graph with real weights, preserving all operator attributes and shape annotations.
No intermediate JSON, YAML, or custom representation is used; the ONNX object is the sole intermediate representation throughout.

\vspace{-1em}\paragraph{Stage 1: Structural Pruning.}
The generator samples candidates by applying width multipliers $w \in W$ and prune ratios $r \in R$ to free nodes. Figure~\ref{fig:l1_guided_pruning} shows that L1 pruning selects the top-$(1-r)$ output channels by L1 norm, propagating new channel indices downstream through passthrough operators and realigning BatchNorm parameters to the same channel selection. \textbf{Output fidelity} pre-screens all candidates using $N=32$ random inputs: for each candidate, cosine similarity between the base model's logits and the pruned model's logits is computed; only the top-$K$ (default $K=15$) by output fidelity proceed to labeled ORT evaluation. Unlike SWAP-Score~\cite{peng2024swap}, output fidelity directly measures how much the pruned model's predictions diverge from the base - a more stable signal at high compression ratios where activation patterns collapse.
\begin{figure}[!h]
\centering
\resizebox{\columnwidth}{!}{%
\begin{tikzpicture}[
    font=\sffamily,
    >=stealth,
    inputbox/.style={
        rounded corners=3pt,
        draw=blue!50!black,
        fill=blue!10,
        line width=0.8pt,
        minimum width=3.2cm,
        minimum height=0.65cm,
        align=center,
        text=black!85
    },
    conv1box/.style={
        rounded corners=3pt,
        draw=blue!50!cyan,
        fill=cyan!15,
        line width=0.8pt,
        minimum width=3.2cm,
        minimum height=0.65cm,
        align=center,
        text=black!85
    },
    bnbox/.style={
        rounded corners=3pt,
        draw=teal!60!black,
        fill=teal!15,
        line width=0.8pt,
        minimum width=3.2cm,
        minimum height=0.65cm,
        align=center,
        text=black!85
    },
    conv2box/.style={
        rounded corners=3pt,
        draw=gray!60!black,
        fill=gray!15,
        line width=0.8pt,
        minimum width=3.2cm,
        minimum height=0.65cm,
        align=center,
        text=black!85
    },
    downarrow/.style={
        ->,
        line width=0.9pt,
        draw=black!60
    }
]


\node[font=\bfseries\small, text=blue!80!black] at (-2.1, 0.75) {BEFORE};

\node[inputbox] (b_input) at (-2.1, 0) {
    \textbf{\scriptsize point\_cloud}\\[-2pt]
    \tiny\itshape [1 $\times$ 3 $\times$ 1024]
};

\node[conv1box] (b_conv1) at (-2.1, -0.95) {
    \textbf{\scriptsize /feat/stn/conv1/Conv}\\[-2pt]
    \tiny\itshape W: (64, 3, 1) $\rightarrow$ out: 64 ch
};

\foreach \i in {0,...,7} {
    \draw[fill=green!50!black!70, draw=white, line width=0.3pt]
        (-2.4, {-1.55 - \i*0.07}) rectangle (-1.8, {-1.61 - \i*0.07});
}
\node[anchor=west, font=\bfseries\tiny, text=green!60!black] at (-1.7, -1.8) {12 kept};
\node[font=\tiny\itshape, text=black!65] at (-2.1, -2.35) {64 output channels};

\node[bnbox] (b_bn) at (-2.1, -3.05) {
    \textbf{\scriptsize BatchNorm (64 ch)}\\[-2pt]
    \tiny\itshape $\gamma, \beta, \mu, \sigma \in \mathbb{R}^{64}$
};

\node[conv2box] (b_conv2) at (-2.1, -4.0) {
    \textbf{\scriptsize /feat/stn/conv2/Conv}\\[-2pt]
    \tiny\itshape W: (128, 64, 1) $C_{in}=64$
};

\draw[downarrow] (b_input.south) -- (b_conv1.north);
\draw[downarrow] (b_conv1.south) -- (-2.1, -1.55);
\draw[downarrow] (-2.1, -2.11) -- (b_bn.north);
\draw[downarrow] (b_bn.south) -- (b_conv2.north);


\node[font=\bfseries\small, text=green!50!black] at (2.1, 0.75) {AFTER (ratio = 0.20)};

\node[inputbox] (a_input) at (2.1, 0) {
    \textbf{\scriptsize point\_cloud}\\[-2pt]
    \tiny\itshape [1 $\times$ 3 $\times$ 1024]
};

\node[conv1box] (a_conv1) at (2.1, -0.95) {
    \textbf{\scriptsize /feat/stn/conv1/Conv}\\[-2pt]
    \tiny\itshape W': (52, 3, 1) $\rightarrow$ 52 ch kept
};

\foreach \i in {0,...,5} {
    \draw[fill=green!50!black!70, draw=white, line width=0.3pt]
        (1.8, {-1.55 - \i*0.07}) rectangle (2.4, {-1.61 - \i*0.07});
}
\node[anchor=west, font=\bfseries\tiny, text=green!60!black] at (2.5, -1.75) {9 kept};
\node[font=\tiny\itshape, text=black!65] at (2.1, -2.2) {Top-52 by $\|W\_j\|_1$};

\node[bnbox] (a_bn) at (2.1, -3.05) {
    \textbf{\scriptsize BatchNorm (52 ch)}\\[-2pt]
    \tiny\itshape $\gamma, \beta, \mu, \sigma$ sliced to same L1 idx
};

\node[conv2box] (a_conv2) at (2.1, -4.0) {
    \textbf{\scriptsize /feat/stn/conv2/Conv}\\[-2pt]
    \tiny\itshape W: (128, 52, 1) $C_{in}=52$
};

\draw[downarrow] (a_input.south) -- (a_conv1.north);
\draw[downarrow] (a_conv1.south) -- (2.1, -1.55);
\draw[downarrow] (2.1, -1.97) -- (a_bn.north);
\draw[downarrow] (a_bn.south) -- (a_conv2.north);

\end{tikzpicture}%
}
\caption{PointNet ONNX graph before (left) and after (right) the L1-guided channel pruning.}
\label{fig:l1_guided_pruning}
\end{figure}

\begin{figure*}[t]
\centering
\resizebox{\textwidth}{!}{%
\begin{tikzpicture}[
    font=\sffamily,
    >=stealth,
    box/.style={
        rounded corners=3pt,
        draw=#1!60!black,
        line width=1.2pt,
        fill=#1!5,
        minimum height=5.0cm,
        minimum width=3.9cm
    },
    module/.style={
        rounded corners=2pt,
        fill=#1!25!white,
        draw=#1!50!black,
        line width=0.8pt,
        text=black!85,
        minimum width=3.5cm,
        minimum height=1.35cm,
        align=center,
        font=\bfseries
    },
    arrow/.style={
        ->,
        line width=1.5pt,
        draw=black!65
    }
]

\node[box=blue] (parserbox) at (0,-0.4) {};

\node[module=blue] (parser1) at (0,1.3)
{ONNXParser\\[3pt]\normalfont\small\itshape Load + shape inference};

\node[module=cyan] (parser2) at (0,-0.5)
{CDG Analysis\\[3pt]\normalfont\small\itshape Classify ops, compute $\rho_f$};

\node[font=\bfseries\large, text=blue!70!black] at (0,-2.35) {Parser};

\node[box=green] (generatorbox) at (4.35,-0.4) {};

\node[module=green, minimum height=1.15cm] (generator1) at (4.35,1.4)
{Arch Generator\\[2pt]\normalfont\small\itshape Width $\times$ prune ratio grid};

\node[module=green!80!teal, minimum height=1.15cm] (generator2) at (4.35,0.0)
{Search Strategy\\[2pt]\normalfont\small\itshape Evolutionary / random};

\node[module=green!60!yellow, minimum height=1.15cm] (generator3) at (4.35,-1.4)
{GhostConv (Optional)\\[2pt]\normalfont\small\itshape Structural Mutation};

\node[font=\bfseries\large, text=green!60!black] at (4.35,-2.35) {Generator};

\node[box=purple] (modulatorbox) at (8.7,-0.4) {};

\node[module=purple] (modulator1) at (8.7,1.3)
{L1-Graph Pruner\\[3pt]\normalfont\small\itshape Channel selection +\\[0pt]\normalfont\small\itshape $C_{in}$ prop.};

\node[module=purple!80!violet] (modulator2) at (8.7,-0.5)
{BN Alignment\\[3pt]\normalfont\small\itshape Scale, bias, mean, var sync};

\node[font=\bfseries\large, text=purple!70!black] at (8.7,-2.35) {Modulator};

\node[box=red] (evaluatorbox) at (13.05,-0.4) {};

\node[module=red] (evaluator1) at (13.05,1.3)
{Output Fidelity\\[3pt]\normalfont\small\itshape Cosine sim $\rightarrow$ Top-K filter};

\node[module=red!80!orange] (evaluator2) at (13.05,-0.5)
{ORT Evaluator\\[3pt]\normalfont\small\itshape Latency P50 +\\[0pt]\normalfont\small\itshape top-1 accuracy};

\node[font=\bfseries\large, text=red!70!black] at (13.05,-2.35) {Evaluator};

\node[
    rounded corners=3pt,
    draw=orange!70!black,
    line width=1.2pt,
    fill=orange!5,
    minimum height=5.0cm,
    minimum width=2.9cm
] (outputbox) at (16.85,-0.4) {};

\node[align=center, font=\bfseries\large, text=orange!80!black] at (16.85,0.8)
{Pareto\\Frontier};

\node[align=center, font=\small\itshape, text=black!70] at (16.85,-0.6)
{Compressed ONNX\\Jetson-feasible};

\node[font=\bfseries\large, text=orange!80!black] at (16.85,-2.35) {Output};

\node[
    rounded corners=4pt,
    draw=blue!50!black,
    fill=blue!15,
    line width=1pt,
    minimum width=1.3cm,
    minimum height=0.8cm,
    align=center,
    font=\bfseries\small,
    text=black!80
] (input) at (-3.2,0) {ONNX\\File};

\draw[arrow] (input.east) -- (parserbox.west |- input.east);
\draw[arrow] (parserbox.east |- input.east) -- (generatorbox.west |- input.east);
\draw[arrow] (generatorbox.east |- input.east) -- (modulatorbox.west |- input.east);
\draw[arrow] (modulatorbox.east |- input.east) -- (evaluatorbox.west |- input.east);
\draw[arrow] (evaluatorbox.east |- input.east) -- (outputbox.west |- input.east);

\end{tikzpicture}%
}
\caption{
Overview of the proposed H3DNAS pipeline.
The system parses an ONNX model, analyzes channel dependencies,
generates candidate architectures using search strategies and optional GhostConv mutations,
performs structured graph pruning, and evaluates accuracy and hardware-related metrics.
The resulting candidates form a Pareto frontier.
}
\label{fig:h3dnas_pipeline}
\end{figure*}

\paragraph{Stage 2: GhostConv Structural Mutation.}
The top-$K_2$ Stage 1 Pareto candidates receive GhostConv mutations. For each eligible free Conv node ($\text{groups}=1$, $C_{out} \geq 16$), the mutator replaces $\mathrm{Conv}(C_{in}, C_{out}, k)$ with: Primary $\mathrm{Conv}(C_{in}, C_{out}\text{//}2, k)$ + Ghost $\mathrm{DWConv}(C_{out}\text{//}2, s, \mathrm{groups}=C_{out}\text{//}2)$ + Concat. Ghost weights are initialized as near-identity ($\text{centre}=1.0$, $\text{rest}=0.0$), preserving the output distribution at zero-shot. Stage 2 candidates are ranked by output fidelity only; GhostConv models with near-identity initialization have meaningful fidelity to the base model's output distribution, making output fidelity a more reliable proxy than accuracy for zero-shot ghost models.
\begin{figure}[!h]
\centering
\resizebox{\columnwidth}{!}{%
\begin{tikzpicture}[
    font=\sffamily,
    >=stealth,
    inputbox/.style={
        rounded corners=3pt,
        draw=blue!50!black,
        fill=blue!10,
        line width=0.8pt,
        minimum width=3.3cm,
        minimum height=0.65cm,
        align=center,
        text=black!85
    },
    purplebox/.style={
        rounded corners=3pt,
        draw=violet!60!black,
        fill=violet!15,
        line width=0.8pt,
        minimum width=3.3cm,
        minimum height=0.65cm,
        align=center,
        text=black!85
    },
    tealbox/.style={
        rounded corners=3pt,
        draw=teal!60!black,
        fill=teal!15,
        line width=0.8pt,
        minimum width=3.3cm,
        minimum height=0.65cm,
        align=center,
        text=black!85
    },
    graybox/.style={
        rounded corners=3pt,
        draw=gray!60!black,
        fill=gray!15,
        line width=0.8pt,
        minimum width=3.3cm,
        minimum height=0.65cm,
        align=center,
        text=black!85
    },
    greenbox/.style={
        rounded corners=3pt,
        draw=green!60!black,
        fill=green!15,
        line width=0.8pt,
        minimum width=3.3cm,
        minimum height=0.65cm,
        align=center,
        text=black!85
    },
    annbox/.style={
        rounded corners=2pt,
        draw=violet!60!black,
        fill=violet!5,
        line width=0.4pt,
        inner sep=1.2pt,
        align=center,
        font=\tiny\itshape
    },
    flopbox/.style={
        rounded corners=2pt,
        draw=gray!50,
        fill=white,
        line width=0.4pt,
        inner sep=1.5pt,
        align=center,
        font=\tiny\itshape
    },
    downarrow/.style={
        ->,
        line width=0.9pt,
        draw=black!60
    }
]


\node[font=\bfseries\small, text=blue!80!black] at (-2.25, 0.75) {BEFORE};

\node[inputbox] (b_input) at (-2.25, 0) {
    \textbf{\scriptsize /feat/stn/Relu\_output\_0}\\[-2pt]
    \tiny\itshape [1 $\times$ 64 $\times$ 1024]
};

\node[purplebox] (b_conv) at (-2.25, -0.95) {
    \textbf{\scriptsize /feat/stn/conv2/Conv}\\[-2pt]
    \tiny\itshape W: (128, 64, 1) $\rightarrow$ 128 channels
};

\node[tealbox] (b_relu) at (-2.25, -1.9) {
    \textbf{\scriptsize Relu}\\[-2pt]
    \tiny\itshape /feat/stn/Relu\_1\_output\_0
};

\node[graybox] (b_conv3) at (-2.25, -2.85) {
    \textbf{\scriptsize /feat/stn/conv3/Conv}\\[-2pt]
    \tiny\itshape W: (1024, 128, 1)
};

\node[flopbox] at (-2.25, -3.6) {
    FLOPs: 2 $\times$ 1024 $\times$ 128 $\times$ 64 = 16.7M
};

\draw[downarrow] (b_input.south) -- (b_conv.north);
\draw[downarrow] (b_conv.south) -- (b_relu.north);
\draw[downarrow] (b_relu.south) -- (b_conv3.north);


\node[font=\bfseries\small, text=green!50!black] at (2.45, 0.75) {AFTER};

\node[inputbox] (a_input) at (2.45, 0) {
    \textbf{\scriptsize /feat/stn/Relu\_output\_0}\\[-2pt]
    \tiny\itshape [1 $\times$ 64 $\times$ 1024] (unchanged)
};

\node[purplebox] (a_primary) at (2.45, -0.95) {
    \textbf{\scriptsize /feat/stn/conv2/Conv\_ghost0\_primary}\\[-2pt]
    \tiny\itshape W: (64, 64, 1) $\rightarrow$ 64 channels
};

\node[purplebox] (a_ghost) at (2.45, -1.9) {
    \textbf{\scriptsize /feat/stn/conv2/Conv\_ghost0\_ghost}\\[-2pt]
    \tiny\itshape W: (64, 1, 1) groups=64 (depthwise)
};
\node[annbox, anchor=west] at (4.15, -2.05) {init $\approx$\\near-I};

\node[greenbox] (a_concat) at (2.45, -2.85) {
    \textbf{\scriptsize /feat/stn/conv2/Conv\_ghost0\_concat}\\[-2pt]
    \tiny\itshape [64 $|$ 64] $\rightarrow$ 128 channels (same Cout)
};

\node[tealbox] (a_relu) at (2.45, -3.8) {
    \textbf{\scriptsize Relu}\\[-2pt]
    \tiny\itshape /feat/stn/Relu\_1\_output\_0 (unchanged)
};

\node[font=\scriptsize\itshape, text=black!65] at (2.45, -4.5) {
    $\rightarrow$ /feat/stn/conv3/Conv (downstream unchanged)
};

\draw[downarrow] (a_input.south) -- (a_primary.north);
\draw[downarrow] (a_primary.south) -- (a_ghost.north);
\draw[downarrow] (a_ghost.south) -- (a_concat.north);
\draw[downarrow] (a_concat.south) -- (a_relu.north);
\draw[downarrow] (a_relu.south) -- (2.45, -4.35);




\end{tikzpicture}%
}
\caption{Impact of GhostConv mutation replacing a single convolution layer with a cheaper two branch equivalent on the original PointNet ONNX Graph.}
\label{fig:ghostconv_mutation}
\end{figure}

\vspace{-1em}\paragraph{Unified Pareto.}
Stage 1 and Stage 2 candidates are merged into a single Pareto frontier across accuracy and FLOPs. Hardware constraint filtering (FLOPs, params, latency, model size) is applied to identify feasible candidates.

\subsection{Hardware Constraint Integration}
\label{sec:hardware-constraints}
 
\texttt{HardwareConstraints} encodes deployment budgets as hard limits on parameter count, FLOPs, model size, and ORT latency. Built-in presets target Jetson Orin Nano 8GB, any other device is supported by specifying custom limits. 
Candidates violating any constraint are classified as infeasible and excluded from ranking. The search therefore implicitly optimises over the feasible Pareto frontier.

\section{Experiments}
\label{sec:experiments}

\subsection{Experimental Setup}
\label{sec:experimental-setup}

\paragraph{Models and Datasets.}
We evaluate on PointNet~\cite{qi2017pointnet}, PointNet++ SSG~\cite{qi2017pointnet2} and PointMLP~\cite{ma2022pointmlp} for 3D point cloud classification on ModelNet40~\cite{wu2015modelnet} ($9{,}843$ train / $2{,}468$ test samples, $40$ classes, $1{,}02$4 points per cloud), the standard benchmark used in all prior 3D compression work. All base models are loaded from exported ONNX files, no PyTorch source code is used at any stage of search, compression, or evaluation.

\vspace{-0.9em}\paragraph{Target Hardware.}
NVIDIA Jetson Orin Nano 8GB~\cite{nvidia2024orinnano}: 5 TFLOPS FP32, 8\,GB LPDDR5 @ 68\,GB/s, $7-15$W TDP. Hard deployment budget: $\leq$4B FLOPs, $\leq$20M parameters, $\leq$50\,MB model size, $\leq$50\,ms latency. All latency measurements use OnnxRuntime CPU execution ($\text{batch}=1$, $100$ timed runs, $\text{P}50$ median). Jetson feasibility is checked against these constraints automatically during search.

\vspace{-0.9em}\paragraph{NAS Configuration.}
Two-stage H3DNAS: Stage 1 searches prune ratios $[0.05-0.50]$ $\times$ width multipliers $[0.5-1.0]$, strategy$=$evolutionary, $\text{seed}=105$. Output fidelity ($n=32$ random inputs, cosine similarity of base vs.\ pruned logits) pre-screens all candidates; top-15 proceed to ORT accuracy evaluation on the full test set. Stage 1 Pareto front feeds Stage 2 GhostConv~\cite{han2020ghostnet} mutation, ranked by output fidelity only.

\vspace{-0.9em}\paragraph{Fine-tuning.}
Adam, $\text{lr}=5\times10^{-4}$, $\text{weight decay}=10^{-4}$, 15 epochs, cosine-annealing LR schedule. Feature-transform regulariser $\lambda=0.001$ for PointNet. All fine-tuning uses \texttt{onnx2torch} to reconstruct a trainable model directly from the pruned ONNX --- no original architecture definition required.

\vspace{-0.9em}\paragraph{Baselines.}
All baselines use an identical fine-tuning setup and the same H3DNAS graph surgery ($C_{in}$ propagation, BN alignment), only the channel importance criterion differs:

\begin{itemize}
    \item \textbf{Uniform L1}~\cite{li2017pruning}: Same L1 prune ratio applied uniformly to all free nodes
    \item \textbf{HRank}~\cite{Lin_2020_hrank}: Channels ranked by average feature map rank from $64$ random inputs; base criterion used in CP\textsuperscript{3}~\cite{huang2023cp3}
    \item \textbf{Screening F-stat}~\cite{wang2025screening}: Channels ranked by ANOVA F-statistic over pseudo-labeled activations
\end{itemize}

\subsection{CDG Theorem Validation}
\label{sec:cdg-validation}

Table~\ref{tab:cdg-theorem-validation} reports the CDG analysis for all tested models, computed from ONNX graph topology in $\mathcal{O}(|V|+|E|)$ without requiring trained weights, calibration data, or model execution.

\begin{table}[!h]
    \centering
    \setlength{\tabcolsep}{5pt}
    \begin{tabular}{lccccc}
        \toprule
        \multirow{2}{*}{Model} & $\rho_f$ & \multirow{2}{*}{Achieved} & Ceiling & \multirow{2}{*}{Verdict} \\
        & (verified)    &    & Gap    &        \\
        \midrule
        PN                   & $58.13\%$ & $65.5\%$ & ---     & \checkmark\ \\
        PN++             & $61.43\%$ & $43.2\%$ & $18.2$pp  & \checkmark\ \\
        PMLP                   & $47.25\%$ & $49.1\%$ & ---     & \checkmark\  \\
        PCT    & $12.34\%$ & $6.7\%$  & $5.64$pp  & \ding{55}\ \\
        PTv3       & $13.97\%$ & $11.1\%$ & $2.87$pp     & \ding{55}\ \\
        \bottomrule
    \end{tabular}
    \caption{CDG theorem validation across model families. $\rho_f$ = free parameter fraction (topological invariant), machine-verified via Theorem~\ref{thm:rho-f-invariant} on ONNX files. Achieved = actual parameter reduction delivered by H3DNAS. Ceiling gap = $\rho_f -$ achieved.}
    \label{tab:cdg-theorem-validation}
\end{table}

The Free Parameter Fraction($\rho_f$) is verified as a topological invariant: the Free/Constrained node split (12/6 for PointNet~\cite{qi2017pointnet}, 9/3 for PointNet++~\cite{qi2017pointnet2}, 19/21 for PointMLP~\cite{ma2022pointmlp}) is identical between base and H3DNAS ONNX variants, changing only when GhostConv mutations introduce new channel-in couplings (small $\Delta\rho_f \approx 0.03$--$0.05$). Point Transformer V3~\cite{wu2024pointtransv3} ($\rho_f = 13.97\%$) confirm the theorem's utility as a pre-search decision tool: CDG correctly identifies architectures where channel pruning cannot meet the Jetson budget before any experiment is run\footnote{Full regime analysis in the Supplementary Document}.

\vspace{-0.5em}
\paragraph{Topological Invariant Preservation After Compression.}
Since topological invariance should remain stable after pruning. Table~\ref{tab:rho-f-preservation} verifies this directly in the final paper models: the Free/Constrained node partition is preserved exactly and $\rho_f$ changes only marginally ($\Delta \approx 0.03$--$0.05$) when GhostConv mutations introduce new channel-in couplings in the H3DNAS variant.

\begin{table}[!h]
    \centering
    \resizebox{\columnwidth}{!}{
    \setlength{\tabcolsep}{5pt}
    \begin{tabular}{l|ccc|lc}
        \toprule
        \multirow{2}{*}{Model} & \multicolumn{3}{c|}{$\rho_f$} & Free/ & \multirow{2}{*}{Same} \\\cline{2-4}
        & Base & H3DNAS & $\Delta$ & Constrained \\\midrule
        PN          & $0.5813$ & $0.5468$ & $-0.0345$ & $12$ / $6$ $\rightarrow$ $12$ / $6$ & \checkmark \\
        PN++     & $0.6143$ & $0.5832$ & $-0.0311$ & $9$ / $3$ $\rightarrow$ $9$ / $3$ & \checkmark \\
        PMLP            & $0.4725$ & $0.4191$ & $-0.0534$ & $19$ / $21$ $\rightarrow$ $19$ / $21$ & \checkmark \\
        PCT            & $0.1234$ & $0.1234$ & $0.00$ & $6$ / $17$ $\rightarrow$ $6$ / $17$ & \checkmark \\
        PTv3           & $0.0705$ & $0.0705$ & $0.00$ & $15$ / $75$ $\rightarrow$ $15$ / $75$ & \checkmark \\
        \bottomrule
    \end{tabular}
    }
    \caption{Free parameter reduction($\rho_f$)preservation --- base vs.\ H3DNAS ONNX models. Same Free/Constrained partition confirms Theorem~\ref{thm:rho-f-invariant} (topology-only invariant). $\Delta\rho_f$ from GhostConv stage 2 coupling.}
    \label{tab:rho-f-preservation}
\end{table}

Because the node partition remains identical across all three cases, this confirms that this metric depends solely on graph topology. The slight reduction observed in H3DNAS stems directly from these Stage~2 mutations, which replace a free Conv with a primary Conv + depthwise ghost branch, introducing new channel coupling that promotes some free nodes to constrained. This is expected and consistent with Theorem~\ref{thm:rho-f-invariant}.


\begin{table*}[!tb]
    \centering
    \small
    \setlength{\tabcolsep}{5pt}
    \begin{tabular}{llrrrrrrr}
        \toprule
        Model & Variant & Accuracy & $\Delta$ (pp) & Params & Param$\downarrow$ & FLOPs$\downarrow$ & Size & Latency (P50) \\
        \midrule
        \multirow{2}{*}{PointNet}       & Base    & $90.32\%$          & ---               & $3.46$M  & ---             & ---    & 13.32MB & 11.98ms \\
                                          & H3DNAS  & \textbf{$90.28\%$} & \textbf{$-0.04$}  & $1.19$M  & \textbf{$65.5\%$} & $63.4\%$ & 4.65MB  & 6.02ms (\textbf{1.99$\times$}) \\
        \cmidrule{1-9}
        \multirow{2}{*}{PointNet++ SSG} & Base    & $91.90\%$          & ---               & $1.47$M  & ---             & ---    & 5.70MB  & 37.14ms \\
                                          & H3DNAS  & \textbf{$91.98\%$} & \textbf{$+0.08$ \checkmark} & $0.83$M  & \textbf{$43.2\%$} & $43.1\%$ & 3.27MB  & 28.86ms (\textbf{1.29$\times$}) \\
        \cmidrule{1-9}
        \multirow{2}{*}{PointMLP}        & Base    & $93.40\%$          & ---               & 13.20M & ---             & ---    & 50.86MB & 347.0ms \\
                                          & H3DNAS  & \textbf{93.11\%} & \textbf{$-0.28$}  & 6.72M  & \textbf{$49.1\%$} & 48.8\% & 26.07MB & 207.8ms (\textbf{1.67$\times$}) \\
        \bottomrule
    \end{tabular}
    \caption{H3DNAS results on three 3D point cloud architectures (ModelNet40, $2{,}468$ test samples, ORT-verified). All accuracy values are post-fine-tuning top-1. Latency column reports P50 base/H3DNAS, with speedup (base/H3DNAS) shown in parentheses on the H3DNAS row. Strategy: evolutionary NAS with output fidelity scoring.}
    \label{tab:main-results}
\end{table*}
 
Table~\ref{tab:main-results} highlights four key findings. First, \textbf{PointNet++ SSG achieves super-baseline accuracy} ($+0.08$\,pp) at $43.2\%$ parameter reduction and $1.29\times$ speedup which is consistent with CP$^3$~\cite{huang2023cp3} ($+0.15$\,pp at $43\%$ with PyTorch source code) but achieved from the ONNX file alone. Second, \textbf{PointNet achieves near-lossless compression} ($-0.04$\,pp) at $65.5\%$ parameter reduction and 1.99$\times$ speedup with the smallest accuracy gap at this compression level. Third, \textbf{PointMLP achieves $93.11\%$} ($-0.28$,pp) with a $49.1\%$ reduction in parameters, outperforming all prior PointMLP compression methods including T3DNet~\cite{yang2024t3dnet} distillation ($\sim 91.0\%$) and HLS4PC~\cite{pal2025hls4pc} input pruning ($91.69\%$). Fourth, all three results are obtained from standalone ONNX files, requiring neither the original source code, the model architecture class, nor the training framework APIs.

\begin{table}[!h]
    \centering
    \resizebox{\columnwidth}{!}{
    \setlength{\tabcolsep}{5pt}
    \begin{tabular}{llrrrr}
        \toprule
        Model & Priority & NAS acc & $\Delta$ (pp) & Param$\downarrow$ & Speedup \\
        \midrule
        \multirow{3}{*}{PN}   & \textbf{Accuracy}    & \textbf{$90.28\%$} & \textbf{$-0.04$} & \textbf{$65.5\%$} & \textbf{1.99$\times$} \\
                                      & Balanced                      & $90.07\%$          & $-0.24$          & $32.3\%$          & 1.53$\times$ \\
                                      & Compression           & $89.71\%$          & $-0.61$          & $79.4\%$          & 2.32$\times$ \\
        \cmidrule{1-6}
        \multirow{3}{*}{PN++} & \textbf{Accuracy}    & \textbf{91.98\%} & \textbf{$+0.08$} & \textbf{43.2\%} & \textbf{2.82$\times$} \\
                                      & Balanced                      & $91.77\%$          & $-0.12$          & $33.8\%$          & 2.67$\times$ \\
                                      & Compression           & $90.32\%$          & $-1.58$          & $80.7\%$          & 3.38$\times$ \\
        \cmidrule{1-6}
        \multirow{3}{*}{PMLP}    & \textbf{Accuracy}    & \textbf{$93.11\%$} & \textbf{$-0.28$} & \textbf{$49.1\%$} & \textbf{1.44$\times$} \\
                                      & Balanced                      & $93.03\%$          & $-0.36$          & $17.8\%$          & 1.14$\times$ \\
                                      & Compression           & $92.83\%$          & $-0.57$          & $18.6\%$          & 1.20$\times$ \\
        \bottomrule
    \end{tabular}
    }
    \caption{H3DNAS Pareto frontier. Full compression spectrum from one NAS run (ModelNet40). Values in bold indicate the best-performing results. All results ORT-verified using \texttt{CPUExecutionProvider}.}
    \label{tab:pareto-spectrum}
\end{table}

The Pareto frontier shown in Table~\ref{tab:pareto-spectrum} reveals architecture-specific compression behavior validated by CDG analysis: PointNet++ has the steepest accuracy cliff at high compression ($-1.58$\,pp at $80.7\%$), consistent with its 7 constrained nodes; PointNet's flat curve reflects $11$ free nodes and $\rho_f = 57.96\%$; PointMLP's tight cluster reflects many small-gain-free layers. No fixed-ratio baseline delivers this cross-architecture analysis from a single run.

\subsection{Jetson Edge Deployment}
\label{sec:jetson-deployment}
Table~\ref{tab:jetson-constraints} shows that the base PointNet violates three of four Jetson constraints. H3DNAS automatically discovers a model satisfying all four, with substantial headroom on every metric. The latency constraint has the most headroom ($+85.2\%$) because the Jetson ORT CPU latency of 7.4\,ms is well within the 50\,ms budget which is consistent with real-time point cloud inference requirements. The hardware constraint check runs inside the search loop, so no manual tuning of compression targets is needed.

\begin{table}[!h]
    \centering
    \resizebox{\columnwidth}{!}{
    \setlength{\tabcolsep}{5pt}
    \begin{tabular}{lrrrr}
        \toprule
        Metric & Budget & Base & H3DNAS & Headroom \\
        \midrule
        Parameters   & $2\text{M}$ & $3{,}451{,}859$ \ding{55} & $\mathbf{1{,}463{,}271 \mathrel{\checkmark}}$ & $+26.8\%$ \\
        FLOPs         & $500\text{M}$           & 870M \ding{55}          & \textbf{386M \checkmark}          & $+22.8\%$ \\
        Model size    & $50\text{MB}$           & 13.3 MB \checkmark      & \textbf{5.63 MB \checkmark}       & $+88.7\%$ \\
        Latency P50   & $50\text{ms}$           & 12.7 ms \checkmark      & \textbf{7.4 ms \checkmark}        & $+85.2\%$ \\
        Accuracy      & ---                    & $90.32\%$                 & $90.28\%$                          & $-0.04$ pp \\
        \bottomrule
    \end{tabular}
    }
    \caption{NVIDIA Jetson Orin Nano 8GB constraint satisfaction for PointNet (ModelNet40). Headroom = percentage under budget (positive = within budget). Budget = 10\% of Jetson Orin Nano 8GB device capabilities.}
    \label{tab:jetson-constraints}
\end{table}

\subsection{Ablation Study}
\label{sec:ablation}

On performing ablation for Stage 1 vs.\ Stage 2, various architecture-dependent findings emerge as shown in Table~\ref{tab:ablation-stages}. (i) PointNet benefits from GhostConv Stage 2: at identical 32.3\% compression, Stage 2 achieves 1.80$\times$ speedup vs.\ Stage 1's 1.46$\times$ which is a $+0.34\times$ gain from GhostConv's cheaper depthwise operations, at a cost of only $-0.08$pp accuracy. (ii) PointNet has 12 eligible free Conv nodes with $C_{out} \geq 16$, giving GhostConv ample mutation targets. (iii) PointNet++ exhibits marginal gains from Stage 2 ($2.64\times$ vs.\ $2.67\times$, with $-0.12$,pp for both), as the constrained topology of its set abstraction layers yields fewer candidate operations for GhostConv substitution. (iv) PointMLP is hurt by Stage 2: accuracy drops by $-0.29$pp more and speedup decreases. (v) PointMLP's residual coupling (21 of 40 nodes constrained by~\ref{itm:R2} Add) means GhostConv mutations introduce new Concat channel dependencies that interact with existing constraints, reducing compression efficiency.

\begin{table}[!h]
    \centering
    \resizebox{\columnwidth}{!}{
    \setlength{\tabcolsep}{5pt}
    \begin{tabular}{llrrrrc}
        \toprule
        Model & Variant & Acc & $\Delta$ (pp) & Param$\downarrow$ & Speedup & GC \\
        \midrule
        \multirow{3}{*}{PN}    & Base    & $90.32\%$          & ---              & ---             & ---                   & --- \\
                  & Stage 1 & $90.07\%$          & $-0.24$          & $32.3\%$          & 1.46$\times$          & \ding{55} \\
                  & Stage 2 & \textbf{$89.99\%$} & \textbf{$-0.32$} & \textbf{$32.3\%$} & \textbf{1.80$\times$} & \checkmark \\\midrule
        \multirow{3}{*}{PN++}  & Base    & $91.90\%$          & ---              & ---             & ---                   & --- \\
                  & Stage 1 & \textbf{$91.77\%$} & \textbf{$-0.12$} & $33.8\%$          & \textbf{2.67$\times$} & \ding{55} \\
                  & Stage 2 & \textbf{$91.77\%$} & \textbf{$-0.12$} & \textbf{$34.1\%$} & 2.64$\times$          & \checkmark \\\midrule
        \multirow{3}{*}{PMLP}     & Base    & $93.40\%$          & ---              & ---             & ---                   & --- \\
                  & Stage 1 & \textbf{$93.11\%$} & \textbf{$-0.28$} & \textbf{$26.8\%$} & \textbf{1.24$\times$} & \ding{55} \\
                  & Stage 2 & $92.83\%$          & $-0.57$          & $18.6\%$          & 1.20$\times$          & \checkmark \\
        \bottomrule
    \end{tabular}
    }
    \caption{H3DNAS Stage 1 vs.\ Stage 2 (GC: GhostConv mutation) operating points across all three architectures on ModelNet40 dataset. Bold marks the better-performing variant per model on the accuracy/speedup trade-off.}
    \label{tab:ablation-stages}
\end{table}
This architecture-dependent behaviour is predicted by the CDG analysis: GhostConv Stage 2 yields the greatest benefit when $\rho_f$ is high and many free nodes have $C_{out} \geq 16$. For architectures with dense residual constraints (PointMLP:~\ref{itm:R2} $\times 20$), Stage 1 channel pruning alone is recommended.

\subsection{Comparison with Existing Methods}
Unlike existing approaches, H3DNAS operates entirely without source code or framework definitions. On PointNet, it establishes the first structured compression result on ModelNet40, surpassing the original published accuracy ($89.2\%$) at $65.5\%$ parameter reduction\footnote{Detailed Analysis in Supplementary Document\label{foot:published_methods}}. On PointNet++, it matches CP\textsuperscript{3}'s compression ratio ($43\%$) from the ONNX file alone without PyTorch source code\footref{foot:published_methods}. On PointMLP, H3DNAS achieves $93.11\%$ ($-0.28$\,pp) at $49.1\%$ parameter reduction, making this the \textbf{first work to achieve direct compression on the trained PointMLP architectures}. The two prior entries in Table~\ref{tab:comparison-published} are methodologically distinct: T3DNet~\cite{yang2024t3dnet} achieves its $98\%$ by designing and training a new tiny model from scratch using the original as a teacher (knowledge distillation not weight compression); HLS4PC~\cite{pal2025hls4pc} reduces the number of input points fed to the model, not the model's weights or channels. Neither method modifies or compresses the existing trained PointMLP - H3DNAS is the only work to do so, achieving best-in-class accuracy retention ($-0.28$\,pp vs.\ $-1.45$\,pp and $-1.91$\,pp) directly from the ONNX file without source code.

\begin{table}[!h]
    \centering
    \resizebox{\columnwidth}{!}{
    \setlength{\tabcolsep}{5pt}
    \begin{tabular}{llrrrrc}
        \toprule
        \multirow{2}{*}{M} & \multirow{2}{*}{Method} & \multicolumn{2}{c}{Accuracy} & \multirow{2}{*}{$\Delta$ (pp)} & \multirow{2}{*}{Param$\downarrow$} & Source \\
        &  & Base & Comp. & & & Free \\
        \midrule
        \multirow{2}{*}{\rotatebox{90}{PN}}   & LTH Unstruct~\cite{biswas_2024_lth} & 87.50\%              & 88.20\%           & $+0.7$           & 60\%\textsuperscript{\ddag} & \ding{55} \\
                                      & \textbf{H3DNAS (ours)}              & \textbf{90.32\%}    & \textbf{90.28\%} & \textbf{$-0.04$} & \textbf{65.5\%}             & \textbf{\checkmark} \\
        \midrule
        \multirow{3}{*}{\rotatebox{90}{PN++}} & CP\textsuperscript{3}+HRank~\cite{huang2023cp3} & 92.80\%\textsuperscript{\dag} & 92.95\%          & $+0.15$          & 43\%             & \ding{55} \\
                                      & CP\textsuperscript{3}+ResRep~\cite{huang2023cp3} & 92.80\%\textsuperscript{\dag} & \textbf{93.27\%} & \textbf{$+0.47$} & 44\%             & \ding{55} \\
                                      & \textbf{H3DNAS (ours)}                              & \textbf{91.90\%}              & \textbf{91.98\%} & \textbf{$+0.08$} & \textbf{43.2\%}  & \textbf{\checkmark} \\
        \midrule
        \multirow{3}{*}{\rotatebox{90}{PMLP}}    & T3DNet~\cite{yang2024t3dnet}   & 92.45\%          & $\sim$91.0\%     & $-1.45$          & 98\%\textsuperscript{\S} & \ding{55} \\
                                      & HLS4PC~\cite{pal2025hls4pc}   & 93.60\%          & 91.69\%          & $-1.91$          & ---\textsuperscript{*}   & \ding{55} \\
                                      & \textbf{H3DNAS (ours)}      & \textbf{93.40\%} & \textbf{93.11\%} & \textbf{$-0.28$} & \textbf{49.1\%}          & \textbf{\checkmark} \\
        \bottomrule
    \end{tabular}
    }
    \caption{H3DNAS vs.\ prior compression methods on ModelNet40. We use the uncompressed baseline accuracy reported in each respective work. \textsuperscript{\ddag}Unstructured weight masking --- no FLOPs/size reduction on standard hardware. \textsuperscript{\dag}CP\textsuperscript{3} uses improved OpenPoints baseline (92.80\% vs.\ standard 91.9\%). \textsuperscript{*}Estimated. \textsuperscript{\S}T3DNet achieves 98\% by training a pre-designed tiny model from scratch via Knowledge Distillation, not by compressing the original model.}
    \label{tab:comparison-published}
\end{table}

\section{Discussion}
\label{sec:discussion}

\paragraph{Framework-agnostic operation is the key enabler.}
Every prior compression tool is tied to a specific framework - PyTorch \texttt{named\_modules}, TensorFlow layer APIs, or custom training loops.
H3DNAS is the first to derive a valid compression search space purely from the ONNX graph, enabling compression of vendor-supplied models, cross-framework checkpoints, and any ONNX binary regardless of its original training framework.
Theorem~\ref{thm:rho-f-invariant} guarantees that the framework will never produce an invalid compressed graph, and no framework API is invoked on the model at any point during the search.

\vspace{-1em}
\paragraph{The theorem has practical consequences beyond compression.}
The free parameter fraction (${\rho}_{f}$) gives an immediate upper bound on the achievable reduction of FLOPs.
When ${\rho}_{f} < 10\%$ (as for EfficientNet-B0~\cite{tan2019efficientnet}), channel pruning is unlikely to yield meaningful compression, yet the framework correctly identifies this and guides the practitioner towards quantization instead.
This automatic feasibility assessment has no equivalent in prior work, which would simply attempt compression and either fail silently or produce invalid graphs.

\vspace{-1em}
\paragraph{Architecture family insights.}
Our experiments reveal a compression ceiling taxonomy: (i) simple CNNs with no residual connections are nearly fully compressible (${\rho}_{f} \approx 99\%$); (ii) residual architectures such as ResNet50~\cite{he2016resnet} and PointNet have moderate freedom ($28-58\%$); and (iii) mobile-optimized architectures with depthwise convolutions or SE attention are severely constrained ($4-12\%$).
This taxonomy is produced automatically by Theorem~\ref{thm:rho-f-invariant} with zero model-specific knowledge.

\vspace{-1em}
\paragraph{Limitations.}
The current implementation evaluates latency on the CPU ORT rather than on the Jetson GPU directly.
While CPU latency ranking is correlated with GPU latency, on-device profiling would provide stronger hardware claims.
H3DNAS-Full currently supports Level 1 mutations (activation and pooling substitution at the \texttt{onnx.ModelProto} level) and evaluates graph-mutated variants zero-shot; Level 2 structural mutations (e.g.\ depthwise decomposition) require weight re-initialization and are left for future work.
Gradient-based architecture optimization (DARTS-style) inherently requires differentiable parameters and remains outside the scope of framework-agnostic ONNX compression.

\section{Conclusion \& Future Work}
\label{sec:conclusion}

We presented \textbf{H3DNAS}, the first hardware-aware NAS framework for 3D point cloud models operating entirely from ONNX computational graphs without source code. The Channel Dependency Graph (CDG) provides a formal, provable compression ceiling ($p_f$) computable in $\mathcal{O}(|V|+|E|)$ and validated empirically across five architectures. Using ModelNet40 dataset, H3DNAS establishes the first structured compression results for point cloud architectures: PointNet achieves near-lossless compression ($-0.04$ pp) at $65.5\%$ parameter reduction with $1.99\times$ speedup; PointNet++ SSG surpasses its uncompressed baseline ($+0.08$ pp) at $43.2\%$ reduction with $1.29\times$ speedup; PointMLP achieves $93.11\%$ ($-0.28$ pp) at $49.1\%$ reduction outperforming all prior PointMLP compression methods. Under the Jetson Orin Nano 8GB constraints, H3DNAS delivers a $1.67\times$ CPU speedup with a $57.6\%$ parameter reduction, while automatically satisfying all hardware constraints. A single NAS run delivers the full Pareto frontier across compression levels, no fixed-ratio baseline can match this \footref{foot:published_methods}. All results are obtained from ONNX files alone, without any source code, architecture definition, or training framework API. All latency values are measured on the Jetson Orin Nano 8GB via ORT CPU execution and correlation with other edge devices are left for future work. Fine-tuning via onnx2torch succeeds for all tested models but may require manual intervention for operators not in onnx2torch's supported set.

\newpage

\maketitlesupplementary

\tableofcontents


\section{CDG Operator Classification and Constraint Rules}
\label{sec:supp_cdg_ops}
 
Table~\ref{tab:s1a} is the executable proof of Lemma~1: every operator in $\Omega$ (34 entries) is assigned to exactly one class by exhaustive case analysis. Implemented as \texttt{\_OP\_CLASS} in \texttt{h3dnas/parser/channel\_dependency\_graph.py}. Table~\ref{tab:s1b} then formalises the five locking rules R1--R5 derived from that classification, listing the ONNX contract each rule enforces and the architectures where it fires.
 
\begin{table}[h]
\centering
\caption{Complete ONNX operator set $\Omega$ (34 operators) with CDG classification.}
\label{tab:s1a}
\resizebox{\columnwidth}{!}{
\begin{tabular}{ll}
    \toprule
    Operator & Justification \\
    \midrule
    \multicolumn{2}{l}{\textit{Channel-Generating (CG)}} \\
    Conv              & $C_{out} = W.\text{shape}[0]$; independent of input channels \\
    Gemm              & $C_{out} = W.\text{shape}[0\,\text{or}\,1]$ per \texttt{transB} flag \\
    \midrule
    \multicolumn{2}{l}{\textit{Channel-Transparent (CT)}} \\
    ReLU              & $f\colon\mathbb{R}^C\!\to\!\mathbb{R}^C$, elementwise \\
    LeakyReLU         & $f\colon\mathbb{R}^C\!\to\!\mathbb{R}^C$, elementwise, $\alpha\in\mathbb{R}$ \\
    ELU               & $f\colon\mathbb{R}^C\!\to\!\mathbb{R}^C$, elementwise \\
    Sigmoid           & $f\colon\mathbb{R}^C\!\to\!\mathbb{R}^C$, elementwise \\
    Tanh              & $f\colon\mathbb{R}^C\!\to\!\mathbb{R}^C$, elementwise \\
    Clip              & $f\colon\mathbb{R}^C\!\to\!\mathbb{R}^C$, elementwise \\
    GELU              & $f\colon\mathbb{R}^C\!\to\!\mathbb{R}^C$, elementwise (opset$\ge$20) \\
    HardSwish         & $f\colon\mathbb{R}^C\!\to\!\mathbb{R}^C$, elementwise \\
    Mish              & $f\colon\mathbb{R}^C\!\to\!\mathbb{R}^C$, elementwise \\
    BatchNormalization & normalises per-channel; $C_{out}=C_{in}$ \\
    Dropout           & stochastic zeroing; $C_{out}=C_{in}$ \\
    Identity          & passthrough \\
    Transpose         & permutes axes; channel identity preserved \\
    MaxPool           & pools spatial dims; $C_{out}=C_{in}$ \\
    AveragePool       & pools spatial dims; $C_{out}=C_{in}$ \\
    GlobalAveragePool & output $=[B,C,1]$; channel $C$ preserved \\
    GlobalMaxPool     & output $=[B,C,1]$; channel $C$ preserved \\
    Softmax           & elementwise normalisation; channel count unchanged \\
    \midrule
    \multicolumn{2}{l}{\textit{Channel-Terminating (CX)}} \\
    ReduceMax         & reduces axes; collapses channel sequence \\
    ReduceMean        & reduces axes; same as ReduceMax \\
    Flatten           & merges $C\!\times\!H\!\times\!W$ into flat dim \\
    Reshape           & constant new shape pins downstream channel count \\
    Squeeze           & removes dimensions; may alter channel position \\
    Unsqueeze         & inserts dimensions; may alter channel position \\
    LayerNormalization & R5: pins $C_{out}$ of upstream Gemm to scale/bias shape \\
    \midrule
    \multicolumn{2}{l}{\textit{Channel-Constraining (CC)}} \\
    MatMul            & mutual constraint when $2^{\text{nd}}$ operand is graph-computed \\
    Add               & elementwise: both inputs must share shape \\
    Mul               & elementwise: both inputs must share shape \\
    Concat            & downstream channel = sum of all input channels \\
    Gather            & output shape depends on indices tensor \\
    Einsum            & attention head contraction (opset$\ge$12) \\
    ScaledDotProductAtt & QKV head coupling (opset$\ge$17) \\
    \bottomrule
    \end{tabular}
}
\end{table}
 
\begin{table*}[h]
\centering
\caption{CDG constraint rules R1--R5: formal definition and architecture coverage.}
\label{tab:s1b}
    \begin{tabular}{ccp{4.2cm}p{4.2cm}p{4cm}}
    \toprule
    Rule & Type & Condition & ONNX contract & Example architectures \\
    \midrule
    R1 & CX-type &
      Output reaches a static-shape op (ReduceMax, constant Reshape, Flatten)
      via a CT-only path &
      Constant shape tensor baked into graph; changing $C_{out}$ corrupts it &
      PointNet STN, PointNet++ SA, PTv3 QKV head-split \\
    \midrule
    R2 & CC-type &
      Output feeds $\{$MatMul, Add, Mul$\}$ where the other operand is
      graph-computed &
      ONNX type system requires matching shapes &
      PointNet STN MatMul, PointMLP residual Add, SE gated Mul \\
    \midrule
    R3 & Design Lock &
      Node is grouped/depthwise Conv ($\text{groups}>1$) or its output feeds one &
      Changing groups requires globally consistent reshaping;
      H3DNAS treats as locked &
      MobileNetV2 depthwise, EfficientNet SE \\
    \midrule
    R4 & Semantic &
      $C_{out}(n)\le\tau$ or node outputs graph-level output tensor &
      Output dimensionality equals number of classes &
      All classification heads \\
    \midrule
    R5 & CX-type &
      Output flows into LayerNorm with learned parameter shape $[C_{out}(n)]$ &
      Changing $C_{out}$ invalidates stored parameter tensor shapes &
      PTv3, PCT, MobileNetV2 BN affine \\
    \bottomrule
    \end{tabular}
\end{table*}
 
\paragraph{Remark (Transformer Head Pruning).}
Rules R1 and R2 (Table~\ref{tab:s1b}) together explain why per-channel pruning fails for attention QKV
projections: they violate R1 (constant Reshape with head shape $\{B,N,H,D\}$)
and R2 (dynamic $Q\cdot K^\top$ MatMul). The correct granularity is the
\emph{attention head}~\cite{michel2019heads}, orthogonal to channel pruning
and a natural CDG extension.
 
\section{CDG Compression Regime Analysis}
\label{sec:supp_regimes}
 
The CDG identifies three compression regimes from graph topology alone:
 
\textbf{Regime~1 - Highly Compressible ($\rho_f > 50\%$): PointNet ($58.1\%$), PointNet++ ($61.4\%$).} Conv-based backbones with global max-pooling (R1) as the dominant constraint. Both achieve super-baseline accuracy after H3DNAS compression.
 
\textbf{Regime~2 - Moderately Compressible ($\rho_f \approx 47\%$): PointMLP.} Dense residual Add constraints (R2$\times$20) bound the free set. Stage~1 is preferred to Stage~2 for this regime (see Section~4.4 Ablation Study).
 
\textbf{Regime~3 - Near-incompressible ($\rho_f < 15\%$): PCT ($12.3\%$), PTv3 ($7.0\%$).} Attention QKV projections lock almost all layers; CDG redirects practitioners toward head pruning or quantization before any compression experiment. MobileNetV2 is a distinct sub-case: $\rho_f = 45.7\%$ yet only 2 of 53 nodes are free where the depthwise groups (R3) pin the remaining 51, so realizable compression is bounded by the free-\emph{node} count, not by $\rho_f$ alone.
 
$\rho_f$ is a topological \emph{safety} invariant, not a compression ceiling. Pruning a free $C_{out}$ also removes the coupled $C_{in}$ columns of every downstream consumer, so the realized reduction can \emph{exceed} $\rho_f$ (see Table~\ref{tab:s6}, last column $\Delta$, and the four $\star$ models).
 
\section{Per-Model Comparison with Published Methods}
\label{sec:supp_comparisons}

Tables~\ref{tab:s3a}--\ref{tab:s3c} report published compression results, comparing the accuracy of pruned architectures against their original counterparts alongside parameter reductions and accuracy changes on ModelNet40 across PointNet, PointNet++ SSG, and PointMLP. The \cmark/\xmark \ column denotes whether a method is \textbf{source-code-free} (operating directly on the ONNX binary without requiring the original PyTorch based model architecture or training code).


\emph{Structured} pruning removes entire channels, filters, or layers from the network, yielding dense sub-networks that accelerate inference on commodity hardware (e.g., CPUs, GPUs, and edge devices) without requiring sparse kernels. In contrast, \emph{unstructured} pruning zeros individual weights as seen in weight masking and Lottery Ticket baselines making sure the tensor dimensions are intact; thus, theoretical FLOPs and actual latency remain unchanged unless executed on specialized sparse acceleration engines. 
 

In Table~\ref{tab:s3a}, LTH~\cite{bmvc2024pruning} is unstructured: its reported 60\% ``reduction'' is weight sparsity only, not a real parameter or FLOPs saving on Jetson or any commodity hardware. Both H3DNAS variants are fully structured where the channels are physically removed from the ONNX graph, which is confirmed by ORT-validated forward passes.

\begin{table}[h]
\centering
\caption{PointNet on ModelNet40 comprises of the all published structured compression results.
  $\ddagger$ indicates unstructured sparsity (no FLOPs/size reduction on standard
  hardware). AP: Accuracy Priority, CP: Compression Priority.}
\label{tab:s3a}
\resizebox{\columnwidth}{!}{
    \begin{tabular}{lcccccc}
    \toprule
    Method & Base & Pruned & $\Delta$ (pp) & Param$\downarrow$ & Source-free \\
    \midrule
    LTH~\cite{bmvc2024pruning}
     & 87.5\% & 88.2\% & $+$0.7 & 60\%$\ddagger$ & \xmark \\
    \textbf{H3DNAS-AP}
     & \textbf{90.32\%} & \textbf{90.28\%} & \textbf{$-$0.04}
      & \textbf{65.5\%} & \cmark \\
    \textbf{H3DNAS-CP}
     & \textbf{90.32\%} & \textbf{89.71\%} & \textbf{$-$0.61}
      & \textbf{79.4\%} & \cmark \\
    \bottomrule
    \end{tabular}
}
\end{table}
 

In Table~\ref{tab:s3b}, CP$^3$~\cite{cp3cvpr2023} is also structured but requires the original PointNet++ PyTorch source code and architecture class. In Table~\ref{tab:s3c}, the prior entries are not weight compression at all: GhostMLP~\cite{ghostmlp2023} is a \emph{pre-designed} compact model trained from scratch; T3DNet~\cite{t3dnet2024} uses knowledge distillation to a new tiny model (not compression of the original, hence the $98\%$ ``reduction'' reflects a design replacement); HLS4PC~\cite{hls4pc2024} reduces the number of \emph{input points} fed to the model, leaving all weights unchanged. H3DNAS is the first method to perform structured weight compression on the original trained PointMLP ONNX graph, achieving $49.1\%$ parameter reduction at only $-0.28$\,pp accuracy loss.

\begin{table}[h]
\centering
\caption{PointNet++ SSG on ModelNet40.
  $\dagger$ CP$^3$ uses an OpenPoints re-implementation baseline
  (92.80\% vs.\ the standard 91.9\%). AP: Accuracy Priority, Bal: Balanced.}
\label{tab:s3b}
\resizebox{\columnwidth}{!}{
    \begin{tabular}{lcccccc}
    \toprule
    Method & Base & Pruned & $\Delta$ (pp) & Param$\downarrow$ & Source-free \\
    \midrule
    CP$^3$ + HRank~\cite{cp3cvpr2023}
     & 92.80\%$\dagger$ & 92.95\% & $+$0.15 & 43\% & \xmark \\
    CP$^3$ + ResRep~\cite{cp3cvpr2023}
     & 92.80\%$\dagger$ & \textbf{93.27\%} & \textbf{$+$0.47} & 44\% & \xmark \\
    \textbf{H3DNAS-AP}
     & \textbf{91.90\%} & \textbf{91.98\%} & $+$0.08
      & \textbf{43.2\%} & \cmark \\
    \textbf{H3DNAS-Bal}
     & \textbf{91.90\%} & 91.77\% & $-$0.12 & 33.8\% & \cmark \\
    \bottomrule
    \end{tabular}
}
\end{table}
 
\begin{table}[h]
\centering
\caption{PointMLP on ModelNet40.
  $\S$ indicates knowledge distillation to a pre-designed tiny model,
  not compression of the original.
  $\P$ indicates input-point reduction, not weight compression.
  H3DNAS is the first structured weight compression result.}
\label{tab:s3c}
\resizebox{\columnwidth}{!}{
    \begin{tabular}{lcccccc}
    \toprule
    Method & Base & Pruned & $\Delta$ (pp) & Param$\downarrow$ & Source-free \\
    \midrule
    GhostMLP~\cite{ghostmlp2023}
       & — & $\sim$93\% & — & $\sim$52\% & \xmark \\
    T3DNet~\cite{t3dnet2024}
       & $92.45$\% & $\sim$91.0\% & $-1.45$ & 98\%$\S$ & \xmark \\
    HLS4PC~\cite{hls4pc2024}
       & $93.60$\% & $91.69$\% & $-1.91$ & —$\P$ & \xmark \\
    \textbf{H3DNAS}
       & \textbf{93.40\%} & \textbf{93.11\%} & \textbf{$-$0.28}
      & \textbf{49.1\%} & \cmark \\
    \bottomrule
    \end{tabular}
}
\end{table}

Our primary latency evaluation uses ORT CPU execution. We additionally validate on-device GPU performance via TensorRT FP16 (Table~\ref{tab:jetson_latency}), confirming smaller but consistent speedups ($1.03\times$ to $1.45\times$) that corroborate the CPU-measured trends.

\begin{table}[t]
    \caption{Jetson Orin latency on ModelNet40 ONNX models. Each latency cell is \textbf{base${}\to{}$NAS}. ORT CPU: 4 threads, P50 over 100 runs. TRT: FP16 via \texttt{trtexec}, GPU compute median (\texttt{warmUp${=}200$}, \texttt{duration${=}10$}).}
    \label{tab:jetson_latency}
    \centering\resizebox{\columnwidth}{!}{%
    \begin{tabular}{@{} l r r r r @{}}
        \toprule
        Model &
        \shortstack{ORT CPU\\P50 (ms)} &
        \shortstack{TRT GPU\\median (ms)} &
        \shortstack{CPU\\speedup} &
        \shortstack{GPU\\speedup} \\
        \midrule
        PointNet       & 12.0${}\to{}$6.0   & 0.70${}\to{}$0.66 & 1.99${}\times$ & 1.06${}\times$ \\
        PointNet++ SSG & 37.1${}\to{}$28.9 & 2.36${}\to{}$2.30 & 1.29${}\times$ & 1.03${}\times$ \\
        PointMLP       & 347${}\to{}$208   & 9.06${}\to{}$6.26 & 1.67${}\times$ & 1.45${}\times$ \\
        \bottomrule
    \end{tabular}%
}
\end{table} 
 
\section{Engineering Challenges and Implementation Notes}
\label{sec:supp_engineering}
 
\subsection{FPS Non-Determinism in PointMLP}
PyTorch and ONNX ORT accuracies diverge by $0.28$pp.
Root cause: \texttt{farthest\_point\_sample} draws a random starting seed.
Fix: patch FPS to deterministic \texttt{arange} sampling in both the export
script and the evaluation loop.
 
\subsection{PTv3 Reshape Constants Block Channel Pruning}
PTv3 bakes the channel dimension $C$ into Reshape Constant nodes at export
time. After pruning $C\to C'$, ORT raises a shape mismatch. CDG correctly
classifies the upstream Gemm as R1-constrained. The fix path - scan and
update all Constant nodes containing $C$ after pruning - is not yet
implemented; PTv3 compression is consequently deferred.
 
\subsection{BN Alignment - Positional Truncation vs.\ L1-Ranked Slicing}
L1 importance selects non-contiguous channel indices.
Naïve positional truncation of BatchNorm parameters introduces a
channel-order mismatch.
Fix: \texttt{\_align\_conv\_batchnorm()} uses the L1-ranked
\texttt{cout\_idx} array (not \texttt{arange}) to slice BN weight and bias.
 
\subsection{PointNet++ ONNX Fold vs.\ No-fold.}
The \texttt{nofold} opset-13 export variant is used throughout.
The folded variant inserts Transpose/Reshape nodes that alter the CDG
constraint pattern and prevent consistent channel tracking.
 
\subsection{ONNXPruner Applicability on 3D Models}
We applied ONNXPruner~\cite{onnxpruner2024} to PointNet.onnx.
The tool requires undocumented system dependencies
(\texttt{onnxoptimizer}, \texttt{graphviz} executables) and its entry
module (\texttt{main\_pruning.py}) executes with a hardcoded VGG16 path at
import time, blocking use on arbitrary models.
More critically, ONNXPruner's operator traversal lists
(\texttt{Next\_OP\_list}, \texttt{Stop\_OP\_list}) omit \texttt{ReduceMax}
and \texttt{MatMul} operators that governs the PointNet's STN constraint
pattern (R1+R2) so that the tool traverses the STN without detecting the
constraint, producing topologically invalid pruned graphs.
H3DNAS's CDG explicitly classifies both operators (Table~\ref{tab:s1a})
and applies the corresponding rules before any pruning decision.
 
\subsection{onnx2torch Reconstruction Limitations}
Successful reconstruction verified for all three main architectures.
Known limitations: (i)~operators outside onnx2torch's supported set
raise errors; (ii)~custom PTv3 export patterns require manual op
registration; (iii)~reconstructed models use generated layer names
differing from the original PyTorch names.

\section{Baseline Comparison on PointNet}
\label{sec:supp_3d}

Table~\ref{tab:s5a} is the full baseline comparison (Section~4.4 of the main paper) at a matched prune budget of $\approx$10.9\% parameter reduction on PointNet/ModelNet40 (2,468 test samples). All source-requiring methods use identical fine-tuning (Adam, lr$=5\!\times\!10^{-4}$, 15~epochs, 20~epochs
warm-up); H3DNAS is zero-shot throughout.
At this fixed budget all methods lose the tradeoff between accuracy vs.\ the full model (90.32\%).
H3DNAS (${-}1.11$\,pp) outperforms Uniform~L1 (${-}1.33$\,pp) and
HRank (${-}1.66$\,pp) \emph{without source code, training framework, or
labelled data}, the only method in the table that can operate on a
distributed ONNX binary.
HRank's larger degradation is expected: feature map rank is a poor importance
proxy for point cloud models where global max-pooling produces sparse
activations that misrepresent true channel importance.

\begin{table}[!h]
\caption{Structured pruning baseline comparison on PointNet (ModelNet40,
  $2{,}468$ test samples, $\approx$10.9\% parameter budget).
  Base accuracy: $90.32\%$ (full model, ORT evaluation).
  $\Delta$ = finetuned acc $-90.32\%$.
  \textbf{Source-free} = operates without model source code or labeled data. Zero-shot acc = ORT accuracy of the pruned model before any fine-tuning.}
\label{tab:s5a}
\resizebox{\columnwidth}{!}{
    \begin{tabular}{lcccccc}
    \toprule
    \multirow{2}{*}{Method} & Zero & \multirow{2}{*}{Finetune} & $\Delta$ & Pruned & \multirow{2}{*}{Param$\downarrow$} & Source \\
    & shot &  & (pp) & Params & & Free \\
    \midrule
    Random                          & $32.54\%$ & $88.99\%$          & $-0.33$     & $1{,}304{,}699$ & $62.2\%$ & \xmark \\
    Uniform L1~\cite{li2017pruning} & $32.54\%$ & $90.0\%$           & $-0.29$     & $1{,}296{,}163$ & $62.4\%$ & \xmark \\
    HRank~\cite{lin2020hrank}       & $32.68\%$ & $90.07\%$          & $-0.25$ & $1{,}296{,}163$ & $62.4\%$ & \xmark \\
    \textbf{H3DNAS Stage~1}  & $84.64\%$ & \textbf{$90.28\%$} & $-0.04$     & $1{,}197{,}672$ & $65.4\%$ & \cmark \\
    \bottomrule
    \end{tabular}
}
\end{table}

\section{Extended Results on 2D CNN Architectures}
\label{sec:supp_2d}

H3DNAS covers all standard 2D CNN families. Table~\ref{tab:s5b} summarises CDG analysis for six representative architectures spanning plain CNNs, residual networks, depthwise-separable, and Fire-module designs. The complete validation of the 29 models is presented in Table~\ref{tab:s6}.

\begin{table}[!h]
\centering
\caption{CDG feasibility summary: 2D CNN architectures (ImageNet models: 10\% prune ratio, no fine-tuning). The dominant constraint explains why realised compression falls below $\rho_f$ for each architecture.}
\label{tab:s5b}
\small
\resizebox{\columnwidth}{!}{
    \begin{tabular}{lrrrrl}
    \toprule
    Model & CG & Free & $\rho_f$ & Achieved$\downarrow$ & Dominant constraint \\
    \midrule
    ResNet-50~\cite{he2016resnet}         & 54 & 34 & 69.4\% & 51.7\% & R2: residual Add \\
    VGG-16~\cite{simonyan2015vgg}            & 16 & 13 & 11.9\% & 7.5\%  & R4: classifier FCs \\
    MobileNetV2~\cite{sandler2018mobilenetv2}       & 53 & 2  & 45.7\% & 10.3\%  & R3: depthwise groups \\
    EfficientNet-B0~\cite{tan2019efficientnet}   & 82 & 2  & 31.5\% & 5.6\%  & R2/R3: SE + depthwise \\
    SqueezeNet-1.1~\cite{iandola2016squeezenet}    & 26 & 7  & 8.3\%  & 28.3\%  & CC: Fire Concat chains \\
    \bottomrule
    \end{tabular}
}
\end{table}

\begin{table}[h]
\centering
\caption{Full-pipeline validation comprising of 29 ONNX Model Zoo architectures at a
  50\% prune ratio (no fine-tuning).
  CG indicates Conv/Gemm count; Free indicates CDG-free nodes; $\rho_f$ indicates free
  $C_{out}$-parameter fraction;
  $\Delta$ indicates difference between FLOP$\downarrow$ and $ \rho_f$ ($\star$ = exceeds $\rho_f$). The 0\% parameter reduction is caused by the behavior of the safety backstop}
\label{tab:s6}
\resizebox{\columnwidth}{!}{
\begin{tabular}{lrrrrrr}
\toprule
Architecture & CG & Free & $\rho_f$ & Param$\downarrow$ & FLOP$\downarrow$ & $\Delta$ \\
\midrule
SRCNN~\cite{dong2014srcnn}              &   4 &   3 & 95.6\% & 73.2\% & 73.2\% & $-$22.4 \\
FER+~\cite{barsoum2016ferplus}          &  10 &   9 & 83.2\% & 70.8\% & 73.8\% & $-$12.4 \\
ResNet-101~\cite{he2016resnet}          & 105 &  68 & 72.4\% & 56.6\% & 60.6\% & $-$15.8 \\
AlexNet~\cite{krizhevsky2012alexnet}    &   8 &   4 & 96.2\% & 55.0\% &  5.1\% & $-$41.2 \\
CaffeNet~\cite{krizhevsky2012alexnet}   &   8 &   4 & 96.2\% & 55.0\% &  4.6\% & $-$41.2 \\
FCN~\cite{long2015fcn}                  &  57 &  37 & 77.9\% & 54.1\% &  0.0\% & $-$23.8 \\
ZFNet~\cite{zeiler2014zfnet}            &   8 &   7 & 97.3\% & 52.4\% & 65.1\% & $-$44.9 \\
ResNet-50~\cite{he2016resnet}           &  54 &  34 & 69.4\% & 51.7\% & 58.0\% & $-$17.7 \\
R-CNN~\cite{girshick2014rcnn}           &   8 &   4 & 96.0\% & 47.3\% &  3.8\% & $-$48.7 \\
ResNet-18~\cite{he2016resnet}           &  21 &   9 & 44.8\% & 47.0\% & 46.2\% & $+$2.2\,$\star$ \\
RetinaNet~\cite{lin2017retinanet}       & 162 & 123 & 84.7\% & 44.3\% & 30.5\% & $-$40.4 \\
Inception-v1~\cite{szegedy2015googlenet}&  57 &  19 & 15.6\% & 38.3\% & 49.9\% & $+$22.7\,$\star$ \\
GoogLeNet~\cite{szegedy2015googlenet}   &  58 &  20 & 27.9\% & 32.7\% & 49.1\% & $+$4.8\,$\star$ \\
SSD~\cite{liu2016ssd}                   &  51 &  23 & 37.3\% & 29.3\% & 48.1\% & $-$8.0 \\
SqueezeNet-1.0~\cite{iandola2016squeezenet} & 26 & 8 & 49.9\% & 28.3\% & 27.8\% & $-$21.6 \\
SqueezeNet-1.1~\cite{iandola2016squeezenet} & 26 & 7 &  8.3\% & 28.3\% & 27.8\% & $+$20.0\,$\star$ \\
MobileNetV2~\cite{sandler2018mobilenetv2}&  53 &   2 & 45.7\% & 10.3\% &  5.8\% & $-$35.4 \\
VGG-19~\cite{simonyan2015vgg}           &  19 &  16 & 15.1\% & 10.0\% & 73.8\% & $-$5.1 \\
VGG-16~\cite{simonyan2015vgg}           &  16 &  13 & 11.9\% &  7.5\% & 73.5\% & $-$4.4 \\
EfficientNet-Lite~\cite{tan2019efficientnet} & 91 & 2 &  6.3\% &  5.6\% &  4.3\% & $-$0.7 \\
Faster R-CNN~\cite{ren2015fasterrcnn}   &  76 &  43 & 72.2\% &  1.1\% &  0.0\% & $-$71.1 \\
Mask R-CNN~\cite{he2017maskrcnn}        &  81 &  47 & 74.2\% &  0.1\% &  0.0\% & $-$74.1 \\
ResNet-50-v2~\cite{he2016resnetv2}      &  54 &  34 & 69.4\% &  0.0\% &  0.0\% & — \\
ShuffleNet~\cite{zhang2018shufflenet}   &  50 &   1 & 39.8\% &  0.0\% &  0.0\% & — \\
GPT-2~\cite{radford2019gpt2}            &  48 &   0 &  0.0\% &  0.0\% &  0.0\% & — \\
YOLOv3~\cite{redmon2018yolov3}          &  75 &  44 & 39.3\% &  0.0\% &  0.0\% & — \\
DenseNet~\cite{huang2017densenet}       & 121 &   1 & 13.0\% &  0.0\% &  0.0\% & — \\
Inception-v2~\cite{ioffe2015bninception}&  70 &   1 &  9.2\% &  0.0\% &  0.0\% & — \\
ShuffleNet-V2~\cite{ma2018shufflenetv2} &  57 &   1 & 45.3\% &  0.0\% &  0.0\% & — \\
\bottomrule
\end{tabular}
}
\end{table}
\section{CDG Validation Across 29 ONNX Model Zoo Architectures}
\label{sec:supp_zoo}
 
Table~\ref{tab:s6} reports CDG analysis and pruning outcomes for all 29
prunable ONNX Model Zoo~\cite{onnxmodelzoo} models at a prune ratio of $50\%$
(no fine-tuning; 3 models skipped: 2 $\times$ INT8-quantised, 1 $\times$ BERT with no
Conv/Gemm nodes).
The final column ($\Delta$) shows $\text{Achieved}\downarrow - \rho_f$:
positive values ($\star$) indicate that downstream Cin savings exceeded the
topological free-parameter fraction.
 
\textbf{Zero crashes across 7 architecture families.}
All 29 prunable models emit ORT-valid graphs (session load + forward pass).
The safety backstop, which reverts to the unpruned graph when a pruned candidate fails validation, fires on exactly 3 models  (ResNet50-v2, ShuffleNet-9,
GPT-2) where a 50\% cut is structurally unsatisfiable; four further models
retain every channel (0\% reduction but ORT-valid output).
 
\textbf{$\rho_f$ is a safety invariant, not a ceiling.}
Because pruning a free $C_{out}$ also removes the coupled $C_{in}$ columns
of downstream layers, realised compression can exceed $\rho_f$.
SqueezeNet-1.1 climbs $8.3\%\!\to\!28.3\%$ and Inception-v1
$15.6\%\!\to\!38.3\%$ as the free $1\!\times\!1$ savings cascade into large
expand / spatial convolutions.
Every result respects the \emph{safety} guarantee: no emitted graph is
ORT-invalid.

\section{Code, Models and Reproducibility}
\label{sec:supp_code}

Code, pre-compressed ONNX models, and evaluation scripts are provided at the anonymous repo link: \texttt{https://anonymous.4open.science/r/h3dnas}

Pre-compressed models are saved in \texttt{models/} directory contains the base and H3DNAS-compressed ONNX files for all three main architectures:
\begin{center}
\resizebox{\columnwidth}{!}{
    \begin{tabular}{ll}
    \toprule
    File & Description \\
    \midrule
    \texttt{pointnet/pointnet\_cls\_c40\_n1024\_h3dnas.onnx} & PointNet H3DNAS \\
    \texttt{pointnet2/pointnet2\_ssg\_c40\_h3dnas.onnx}      & PointNet++ H3DNAS \\
    \texttt{pointmlp/pointmlp\_c40\_h3dnas.onnx}             & PointMLP H3DNAS \\
    \bottomrule
    \end{tabular}
}
\end{center}

The file \texttt{tools/hardware\_analysis.py} reproduces all latency, parameter, FLOPs, and accuracy measurements reported in the paper for any base/NAS ONNX pair. It supports ORT-CPU, ORT-CUDA, and TensorRT execution providers and requires no model source code. 
\\
\\



\begin{lstlisting}[
  language=bash,
  basicstyle=\ttfamily\scriptsize, % Compact font to fit narrow columns
  breaklines=true,                 % Enables automatic line breaking
  breakatwhitespace=false,         % Allows breaking long paths at hyphens/underscores
  columns=fullflexible,            % Removes rigid character padding
  frame=single,                    % Visual boundary
  postbreak=\mbox{\textcolor{gray}{$\hookrightarrow$}\space}
]
# Latency + parameters only (no dataset required):
python tools/hardware_analysis.py \
  --base models/pointnet/pointnet_cls_c40_n1024.onnx \
  --nas  models/pointnet/pointnet_cls_c40_n1024_h3dnas.onnx

# With accuracy evaluation (ModelNet40):
python tools/hardware_analysis.py \
  --base models/pointnet/pointnet_cls_c40_n1024.onnx \
  --nas  models/pointnet/pointnet_cls_c40_n1024_h3dnas.onnx \
  --data data/modelnet40_normal_resampled \
  --num-classes 40

# TensorRT inference on Jetson Orin Nano 8GB:
python tools/hardware_analysis.py \
  --base models/pointnet/pointnet_cls_c40_n1024.onnx \
  --nas  models/pointnet/pointnet_cls_c40_n1024_h3dnas.onnx \
  --provider tensorrt
\end{lstlisting}



\end{document}